\documentclass[sigconf]{aamas}

\usepackage{balance} %

\makeatletter
\gdef\@copyrightpermission{
  \begin{minipage}{0.2\columnwidth}
   \href{https://creativecommons.org/licenses/by/4.0/}{\includegraphics[width=0.90\textwidth]{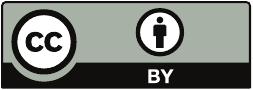}}
  \end{minipage}\hfill
  \begin{minipage}{0.8\columnwidth}
   \href{https://creativecommons.org/licenses/by/4.0/}{This work is licensed under a Creative Commons Attribution International 4.0 License.}
  \end{minipage}
  \vspace{5pt}
}
\makeatother

\setcopyright{ifaamas}
\acmConference[AAMAS '26]{Proc.\@ of the 25th International Conference
on Autonomous Agents and Multiagent Systems (AAMAS 2026)}{May 25 -- 29, 2026}
{Paphos, Cyprus}{C.~Amato, L.~Dennis, V.~Mascardi, J.~Thangarajah (eds.)}
\copyrightyear{2026}
\acmYear{2026}
\acmDOI{}
\acmPrice{}
\acmISBN{}

\doi{XUFR9329}

\acmSubmissionID{1770}

\title{Perception-Based Beliefs for POMDPs with Visual Observations}

\author{Miriam Schäfers\,\orcidID{0009-0004-9656-2389}}
\affiliation{
  \institution{Ruhr-University Bochum}
  \city{Bochum}
  \country{Germany}}
\email{m.schaefers@rub.de}

\author{Merlijn Krale\,
\orcidID{0009-0006-7194-2056}}
\affiliation{
  \institution{Radboud University}
  \city{Nijmegen}
  \country{The Netherlands}}
\email{merlijn.krale@ru.nl}

\author{Thiago D. Sim\~{a}o\,
\orcidID{0000-0001-5875-5543}}
\affiliation{
  \institution{Eindhoven University of Technology}
  \city{Eindhoven}
  \country{The Netherlands}}
\email{t.simao@tue.nl}

\author{Nils Jansen\,
\orcidID{0000-0003-1318-8973}}
\affiliation{
  \institution{Ruhr University Bochum}
  \city{Bochum}
  \country{Germany}}
\affiliation{
  \institution{Radboud University}
  \city{Nijmegen}
  \country{The Netherlands}}
\email{n.jansen@rub.de}
\author{Maximilian Weininger\,\orcidID{0000-0002-0163-2152}}
\affiliation{
  \institution{Ruhr University Bochum}
  \city{Bochum}
  \country{Germany}}
\email{maximilian.weininger@rub.de}

\begin{abstract}
Partially observable Markov decision processes (POMDPs) are a principled planning model for sequential decision-making under uncertainty. 
Yet, real-world problems with high-dimensional observations—such as camera images---remain intractable for traditional belief- and filtering-based solvers.
To tackle this problem, 
we introduce the \textbf{P}erception-based \textbf{B}eliefs for \textbf{P}OMDPs framework~($\framework$), which complements such solvers with a perception model.
This model takes the form of an image classifier which maps visual observations to probability distributions over states.
$\framework$ incorporates these distributions directly into belief updates, so the underlying solver does not need to reason explicitly over high-dimensional observation spaces.
We show that the belief update of $\framework$ coincides with the standard belief update if the image classifier is exact.
Moreover, to handle classifier imprecision, we incorporate uncertainty quantification and introduce two methods to adjust the belief update accordingly.
We implement $\framework$ using two traditional POMDP solvers and empirically show that (1) it outperforms existing end-to-end deep RL methods and (2) uncertainty quantification improves robustness of $\framework$ against visual corruption.
\end{abstract}

\keywords{POMDP; Partial Observability; Planning; Perception; Belief}

\usepackage[utf8]{inputenc} %
\usepackage[T1]{fontenc}    %
\usepackage{url}            %
\usepackage{booktabs}       %
\usepackage{nicefrac}       %
\usepackage{microtype}      %
\usepackage{xcolor}         %
\usepackage{float}
\usepackage[utf8]{inputenc}
\usepackage[inline]{enumitem}
\setlist[enumerate]{label=(\arabic*)}
\usepackage{centernot}
\usepackage{subcaption}

\usepackage{graphicx}
\usepackage{verbatim}
\usepackage{booktabs}
\usepackage[capitalise,noabbrev,nameinlink]{cleveref}
\usepackage{nicefrac}

\usepackage{csquotes} %

\usepackage{caption}

\usepackage{xspace}
\usepackage[textwidth=17mm,
	backgroundcolor=white,
    tickmarkheight=2pt,
	textsize=tiny]{todonotes}
\usepackage{silence}
\usepackage{cleveref}

\usepackage{listings}

\usepackage{mathtools}
\usepackage{amsthm}
\usepackage{thmtools}

\newtheorem*{lemma*}{Lemma}
\newtheorem*{problem*}{Problem Statement}

\crefname{observation}{Observation}{Observations}

\newtheorem*{corollary*}{Corollary}
\crefname{corollary}{Corollary}{Corollaries}
\theoremstyle{definition}
\newtheorem{definition}{Definition}
\newtheorem{example}{Example}
\newtheorem{assumption}{Assumption}
\crefname{assumption}{Assumption}{Assumptions}
\theoremstyle{remark}

\crefname{finding}{Result}{Results}

\usepackage[most]{tcolorbox}
\newtcolorbox{leftvrule}[1][]{colback=white,  boxrule=0pt, boxsep=0pt, breakable, enhanced jigsaw, borderline west={1.5pt}{0pt}{black},
before skip=5pt,after skip=5pt,
#1}

\usepackage{fontawesome5}
\usepackage{tikz}
  \usetikzlibrary{positioning,calc,arrows,automata,hobby,trees,shadows,fit}
  \usetikzlibrary{bayesnet}

\definecolor{_black}{cmyk}{0,0,0,0.8}
\definecolor{dblue}{HTML}{1F78B4}
\definecolor{dgreen}{HTML}{33A02C}
\definecolor{dred}{HTML}{E31A1C}
\definecolor{dorange}{HTML}{FF7F00}
\definecolor{dpurple}{HTML}{6A3D9A}

\tikzset{
  box/.style  = {rectangle,draw, minimum width=2cm, minimum height=1cm,rounded corners=2pt,align=center},
  connection/.style = {rounded corners=8pt}
}

\usepackage{color,xcolor}
\RequirePackage[T1]{fontenc}
\usepackage[utf8]{inputenc} 
\usepackage{pgfplots}
\usepackage{amsmath}
\pgfplotsset{compat=1.18}
\usetikzlibrary{pgfplots.groupplots}
\usetikzlibrary{calc}
\graphicspath{{images/}}

\usepackage{etoolbox} %
\newtoggle{arxiv}
\newcommand{\ifarxivelse}[2]{\iftoggle{arxiv}{#1}{#2}}
\toggletrue{arxiv}
\def\orcidID#1{%
  \smash{%
    \href{https://orcid.org/#1}{%
      \raisebox{-1.25pt}{\includegraphics[height=1.6ex]{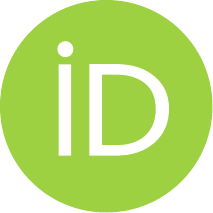}}%
    }%
  }%
}

\usepackage{ifthen}

\newboolean{showtodos}
\setboolean{showtodos}{true} %

\ifthenelse{\boolean{showtodos}}{
  \usepackage{xcolor}
  \usepackage{todonotes}
  \newcommand{\todomacro}[3]{\todo[bordercolor=#2, linecolor=#2]{\textcolor{#2}{\textbf{#1:} #3}}}
  \newcommand{\mw}[1]{\todomacro{MW}{orange!50!black}{#1}}
  \newcommand{\merlijn}[1]{\todomacro{Merlijn}{blue!75!black}{#1}}
  \newcommand{\miriam}[1]{\todomacro{Miriam}{olive!75!black}{#1}}
  \newcommand{\ts}[1]{\todomacro{TS}{purple!75!black}{#1}}
  \newcommand{\nj}[1]{\todomacro{NJ}{red!75!black}{#1}}
}{
  \newcommand{\todomacro}[3]{}
  \newcommand{\mw}[1]{}
  \newcommand{\merlijn}[1]{}
  \newcommand{\miriam}[1]{}
  \newcommand{\ts}[1]{}
  \newcommand{\nj}[1]{}
}

\newcommand{\eqdef}{\coloneqq}
\newcommand{\abs}[1]{\lvert #1 \rvert}

\DeclareMathOperator*{\argmax}{arg\,max}

\newcommand{\val}{\mathsf{V}}
\newcommand{\support}{\mathsf{supp}}

\newcommand{\vis}{\mathrm{v}}
\newcommand{\nonvis}{\neg\vis}

\newcommand{\perc}{f} %

\newcommand{\ufun}{u_f}

\newcommand{\obsvis}{\obs_{\vis}}
\newcommand{\obsnonvis}{\obs_{\nonvis}}
\newcommand{\statesvis}{\states_{\vis}}
\newcommand{\statesnonvis}{\states_{\nonvis}}
\newcommand{\statevis}{\s_{\vis}}
\newcommand{\statenonvis}{\s_{\nonvis}}
\newcommand{\obvis}{\ob_{\vis}}
\newcommand{\obnonvis}{\ob_{\nonvis}}

\newcommand{\obsfunvis}{\obsfun_{\vis}}
\newcommand{\obsfunnonvis}{\obsfun_{\nonvis}}

\newcommand{\obsfunvisapprox}{\hat{\obsfun}_\vis}
\newcommand{\modelapprox}{\mathcal{\hat{M}}}
\newcommand{\dataset}{D}
\newcommand{\Dplan}{\dataset^\text{plan}}
\newcommand{\Dperc}{\dataset^\text{perc}}
\newcommand{\Dact}{\dataset^\text{act}}

\newcommand{\states}{\mathsf{S}}

\newcommand{\obs}{\mathsf{Z}}

\newcommand{\actions}{\mathsf{A}}

\newcommand{\beliefs}{\mathsf{B}}

\newcommand{\trans}{\mathsf{T}}
\newcommand{\obsfun}{\mathsf{O}}
\newcommand{\rewardfun}{\mathsf{R}}
\newcommand{\s}[0]{\textbf{s}} %

\newcommand{\ob}{\textbf{z}} %

\newcommand{\tuq}{\texttt{TUQ}\xspace}
\newcommand{\wuq}{\texttt{WUQ}\xspace}
\newcommand{\framework}{\texttt{PBP}\xspace}

\newcommand{\hsvi}{\texttt{HSVI}}
\newcommand{\pomcp}{\texttt{POMCP}}
\newcommand{\HSVI}{\hsvi}
\newcommand{\POMCP}{\pomcp}
\newcommand{\pbphsvi}{\framework\text{-}\hsvi}
\newcommand{\thsvi}{\texttt{t}\framework\text{-}\hsvi}
\newcommand{\whsvi}{\texttt{w}\framework\text{-}\hsvi}
\newcommand{\tpomcp}{\texttt{t}\framework\text{-}\texttt{POMCP}}
\newcommand{\dqn}{\texttt{DQN}}
\newcommand{\percdqn}{\texttt{PSRL}\text{-}\dqn}
\newcommand{\perchsvi}{\texttt{PSRL}\text{-}\hsvi}
\newcommand{\oracle}{\texttt{Oracle}}
\newcommand{\noperc}{\texttt{NoPerc}}

\newcommand{\env}[1]{\emph{#1}}
\newcommand{\envtraffic}{\env{Intersection}}
\newcommand{\envlake}{\env{FrozenLake}}
\newcommand{\envflowers}{\env{FlowerGrid}}

\newlist{questionenum}{enumerate}{1}
\setlist[questionenum]{label=\textbf{Q\arabic*.}, ref=Q\arabic*, leftmargin=2\parindent, nosep}
\Crefname{question}{Question}{Questions}
\crefname{question}{question}{questions}
\crefalias{questionenumi}{question}

\newcommand{\mypar}[1]{}

\usepackage{algorithm}
\usepackage[noend]{algpseudocode}

\begin{document}

\pagestyle{fancy}
\fancyhead{}

\maketitle

\section{Introduction}

\emph{Partially observable Markov decision processes}~\citep[POMDPs;][]{KaelblingLC98} formalise sequential decision-making under uncertainty in partially observable settings~\cite[Chapter 16.4]{AIMA}.
Many applications of POMDPs, such as robotics and autonomous driving~\cite{cassandra1998survey,Kurniawati2022}, require reasoning about observation spaces based on high-dimensional images.
We call such problems \emph{vision POMDPs}~\cite[VPOMDPs;][]{DBLP:conf/l4dc/DeglurkarLTSFT23}.

\begin{figure}[tb]
  \centering
    \includegraphics[width=0.18\textwidth]{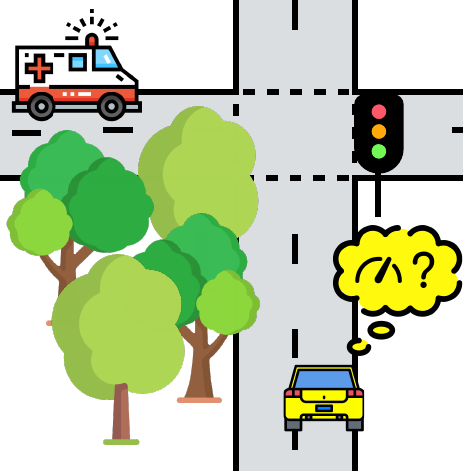}
  \caption{A vision POMDP example: 
  A car 
  must determine the presence of an ambulance using an audio sensor and the traffic light's color by interpreting camera images.
  }
  \label{fig:runningexample}
\end{figure}

\begin{example}%
    Consider %
     \label{ex: runningexample}
    an autonomous vehicle that needs to determine how to approach an intersection~(\cref{fig:runningexample}).
    Its goal is to cross only if 
    \begin{enumerate*}
        \item the traffic light is green, and
        \item no ambulance approaches the junction.
    \end{enumerate*} 
    The vehicle is equipped with a camera to detect the traffic light and an audio sensor to detect an ambulance.
    Both sensors may provide imperfect information, e.g.\ due to weather conditions, occlusion, or background noise.
    This problem can be viewed as a POMDP, where states describe the location of the car, the color of the traffic light, and the presence of an ambulance, and observations correspond to the sounds and camera images.
    Modeling the observation function would require assigning probabilities to every possible observation of the traffic light;
    in particular, the image can be subject to various sources of \emph{visual corruption}, caused by weather conditions, occlusion, etc.
\end{example}
The main challenge in solving VPOMDPs are the massive spaces of visual observations.
\emph{End-to-end deep reinforcement learning (DRL)} methods, described in the related work below, may be able to deal with such observations.
However, DRL agents are often hard to explain, do not exhibit any guarantees on their behavior, and their performance is unpredictable.
These drawbacks render them unsuitable for safety-critical settings.
In contrast, \emph{planning} methods, like SARSOP~\cite{DBLP:conf/rss/KurniawatiHL08}, POMCP~\cite{DBLP:conf/nips/SilverV10}, DESPOT~\cite{DBLP:journals/jair/YeSHL17}, and AdaOPS~\cite{DBLP:conf/nips/WuYZYLLH21}, often provide results that are easier to interpret, have guarantees on optimality, and find better policies than end-to-end learning approaches.
However, these \emph{belief-based} methods are restricted to smaller state- and action spaces and, in particular, currently cannot deal with massive observation spaces. 
We address this research gap.

\subsection*{Our Contribution}

\paragraph{Our approach.} 
This paper presents the \textbf{P}erception-based \textbf{B}eliefs for \textbf{P}OMDPs framework~($\framework$) for solving VPOMDPs.
The key idea of our approach is separating interpretation of visual observations from belief-based planning.
More precisely, $\framework$ uses a perception model to map visual observations to probability distributions over states.
We use these probability distributions to perform an efficient belief update without having to reason over the entire observation space.
This new belief update rule relies on structural assumptions that naturally arise in many VPOMDP settings, see \cref{sec:problem-statement}.
$\framework$ can integrate this belief update into any belief-based POMDP solver, allowing the resulting solver to scale to massive observation spaces.

\paragraph{Uncertainty quantification.}
Perception models are susceptible to erroneous predictions, especially in the presence of visual corruption. 
To increase the robustness of $\framework$, we use \emph{uncertainty quantification}~\citep{DBLP:conf/icml/GalG16} to estimate the uncertainty of the perception model about its prediction and introduce two methods to incorporate the uncertainty into the decision-making process:
The \textit{threshold-based} method discards predictions with high uncertainty during the belief update. 
The \textit{weighting-based} method adjusts the impact of the perception on the belief update according to its uncertainty.

\paragraph{Empirical evaluation.}
To demonstrate the feasibility of our approach, we implement two instantiations of $\framework$ using the classical POMDP algorithms $\HSVI$~\cite{DBLP:conf/uai/SmithS04} and $\POMCP$~\cite{DBLP:conf/nips/SilverV10}.
We evaluate both variants on novel benchmarks that feature small state and action spaces but complex visual observations.
Our experiments confirm that $\framework$ operates as intended and can be effectively combined with existing POMDP solvers.
In particular, $\framework$ with $\HSVI$ achieves competitive performance compared to state-of-the-art VPOMDP solvers, especially under visually corrupted image observations.

\smallskip
In summary, our main contributions are:
\begin{itemize}[nosep]
    \item 
        We introduce a novel method of performing a \textbf{belief update} in POMDPs using predictions from a perception model.
    \item 
        We use \textbf{uncertainty quantification} to make this update more robust against visual corruption.
    \item 
        We introduce the \textbf{$\framework$} framework, which allows integrating our belief update into existing POMDP solvers, and empirically show this approach is competitive with prior work.

\end{itemize}

\subsection*{Related Work}

\emph{Discretization.}
Many methods combine standard algorithms with discretizing the observation space, such as  POMCPOW~\cite{DBLP:conf/aips/SunbergK18} and DESPOT-$\alpha$~\cite{DBLP:conf/rss/GargHL19} (which both use progressive widening~\cite{DBLP:conf/lion/CouetouxHSTB11}), or the dynamic methods of~\citet{hoey2005solving}.
However, finding a discretization with small information loss is challenging for images.

\emph{Monte Carlo methods.}
Methods based on particle filters, such as AdaOPS \cite{DBLP:conf/nips/WuYZYLLH21}, LABECOP \cite{DBLP:conf/icra/HorgerK21}, SPARSE-PFT \cite{lim2023optimality}, and BetaZero \cite{DBLP:conf/rlc/Moss0CK24} use Monte-Carlo sampling to avoid reasoning over the full observation space.
However, such methods still require full knowledge of the observation function to update the particle filter, which is unrealistic for VPOMDPs.

\emph{End-to-end deep RL.}
Deep reinforcement learning methods often transform visual observations into latent representations via, e.g., convolutional layers (such as in \cite{DBLP:journals/corr/HausknechtS15}), variational auto encoders (such as in \cite{DBLP:conf/nips/HaS18}), or transformers (as discussed in \citet{DBLP:conf/nips/NiMEB23}).
Other deep reinforcement learning methods aim to learn state estimation by learning components of a particle filter through neural networks \cite{DBLP:conf/iclr/MaKHLY20}, by learning components of the belief update through variational autoencoders \cite{DBLP:conf/icml/IglZLWW18}.
These methods are trained end-to-end, which means the network needs to synchronously learn a policy and corresponding latent representation or state estimation.
This is often computationally- and data-inefficient, and makes methods hard to explain or verify.

\emph{Perception models.}
Some methods use perception models to predict state features from images.
However, it remains an open question how to best integrate such predictions into decision-making, especially in partially observable settings.
Recent work has explored ways to incorporate uncertainty quantification \cite{DBLP:journals/tits/LiuWPCYL22,DBLP:journals/tiv/LiLHHZT24}, though these methods do not incorporate history.
In contrast, our framework uses beliefs to unify perception and decision-making and takes both history and uncertainty into account.

\emph{Compositional learning.}
Visual Tree Search (VTS) \cite{DBLP:conf/l4dc/DeglurkarLTSFT23} extends DualSMC \cite{DBLP:conf/ijcai/WangL0ZD0T20} by learning generative models offline to define particle filter components, and employs a particle belief-based Monte Carlo Tree Search planner \cite{DBLP:conf/aips/SunbergK18} to select actions.
In contrast, our framework generalizes to any belief-based method (of which we consider particle filters a subset).
Moreover, our framework directly uses a perception model, which allows the take advantage of off-the-shelf \emph{Computer Vision} architectures and pretrained models to explicitly interpret the visual observations in belief construction.

\emph{Partially supervised RL.} PSRL \cite{DBLP:conf/icmla/LanierXJZV24} uses image classifier predictions as inputs to an otherwise end-to-end trained decision maker. However, these predictions are only used at the current timestep, whereas our method provides a principled way to incorporate all past predictions into a single belief.

\section{Preliminaries} %
\label{sec:prelims}
$\Delta(X)$ denotes the set of all probability distributions over a set $X$.
\subsection{Planning with State Uncertainty
}

\paragraph{POMDPs.} A  \emph{partially observable Markov decision process}~(POMDP) %
is a tuple $\mathcal{M} = \langle \states, \actions, \trans, \rewardfun, \gamma, b_0, \obs, \obsfun \rangle$, 
with finite sets of states $\states$, actions $\actions$, and observation $\obs$; 
a probabilistic transition function $\trans \colon \states \times \actions \to \Delta(\states)$; 
a reward function \( \rewardfun \colon \states \times \actions \to \mathbb{R} \); a discount factor $\gamma \in (0,1)$; an initial state distribution $b_0 \in \Delta(\states)$; and an observation function \( \obsfun \colon  \states \to \Delta(\obs) \).%
\footnote{Our method can be generalized to observation functions that also depend on actions and next states and reward functions that also depend on next states.}
We employ a factored state and observation representation with a combination of different \emph{variables}:
 $\states = \times_{i\in[1,\ldots,n]} S_i$ and $\obs = \times_{i\in[1,\ldots,m]} Z_i$, with states and observations given as tuples $\s=(s_1,\ldots,s_n) \in \states$ and $\ob=(z_1,\ldots,z_m)\in \obs$.
Single state-variables need not exactly correspond to single observation variables, hence $n\neq m$ is valid.
In our setting, some observation variables $z_i$ for $1\leq i \leq m$ are images, see Section~\ref{sec:problem-statement}.

\paragraph{Semantics of POMDPs.}
A POMDP evolves as follows:
At each time step $t\in\mathbb{N}_0$, the system is in a state $\s_t \in \states$, initially sampled according to $b_0$. 
The agent selects an action $a_t\in \actions$, obtains reward $\rewardfun(\s_t,a_t)$, and the system transitions to the next state $\s_{t+1}$ that is sampled according to $\trans(\s_t,a_t)$.
Repeating this interaction of agent and POMDP yields a \emph{path} $\rho \in (S \times A) \times (S \times A) \times \ldots$; however, in POMDPs, the agent cannot observe $\s_t$, but instead obtains observations $\ob_t \sim \obsfun(\s_t)$.
Thus, instead of~$\rho$, the agent only knows the observable \emph{history} $h \in (\obs\times \actions) \times (\obs\times\actions) \times \ldots$; 
such a history can be summarized using a \emph{belief} $b_t \in \beliefs \eqdef \Delta(\states)$, i.e. a probability distribution for the current state.

A \emph{policy} $\pi\colon \beliefs \to \Delta(\actions)$ describes the agent's behaviour by mapping beliefs to actions.
The objective of the agent is to maximize its expected cumulative discounted reward, or \emph{value}, defined as:
\begin{equation*}
    \val_\pi \eqdef \mathbb{E}_\pi \big[ \sum_{t=0}^\infty \gamma^t \rewardfun(\s_t,a_t) \big],
\end{equation*} 
where $\mathbb{E}_\pi$ is the expectation over paths under the probability measure induced by executing a POMDP with fixed policy~$\pi$.

\subsection{Perception Using Deep Neural Networks} \label{subsec: perception}
The main goal in perception is to learn a model that can reliably extract meaningful information from images.
Crucially, the model should generalize to unseen examples from the same distribution, rather than overfitting to the training data.
For this task, (convolutional) deep neural networks are the standard, outperforming traditional methods when trained on large datasets~\cite{AIMA}.

\paragraph{DNN classifiers.} 
A \emph{deep neural network (DNN) classifier}
is a function \( f
\colon X \rightarrow \Delta(L) \), mapping an input image \( x \in X \) to a probability distribution over possible classes \( l \in L\), where \( L \) is the set of possible classes~\citep{DBLP:books/daglib/0040158}.
The probability distribution is obtained by normalizing raw confidence scores through the \emph{softmax} function~\cite[Chapter 22.2.2]{AIMA}.
Typically, a single class is predicted by choosing the one with the highest probability.
DNNs are trained using supervised learning based on a dataset $D \subset X \times L$ consisting of pairs of images and classes. %
In practice, the predicted probabilities reflect the model's confidence based on its training experience, rather than true posterior likelihoods. 
However, these confidence scores often do not align with the actual probability of correctness, which we mitigate using \emph{temperature scaling}~\citep{DBLP:conf/icml/GuoPSW17}, a post-hoc calibration method that adjusts the sharpness of the softmax distribution without changing the order of the probabilities.

\paragraph{Uncertainty functions.}
{Uncertainty quantification}~\cite{DBLP:journals/air/GawlikowskiTALHFKTJRSYBZ23} evaluates the uncertainty in a DNN's predictions, allowing to ignore predictions that are likely incorrect.
Given a classifier \( \perc : X {\rightarrow} \Delta(L) \), a function \( u_{\perc} : X {\rightarrow} [0,1] \) mapping an image \( x \in X \) to a score \( u_{\perc}(x) \) is an uncertainty function of \( \perc \) if a high value of \( u_{\perc}(x) \) indicates greater uncertainty, i.e. a higher likelihood of an incorrect prediction.
Many existing uncertainty functions are surveyed in \citep{DBLP:journals/air/GawlikowskiTALHFKTJRSYBZ23}.
We use (i) the prediction confidence \(u_{\perc}(x)\eqdef 1-\max_{i =1, \ldots,|L|} \perc(l_i | x) \) and (ii) the entropy of the softmax output \( u_{\perc}(x) \eqdef - \sum_{i =1, \ldots,|L|} \perc(l_i | x) \cdot \log_2 \perc(l_i | x) \). 
Further, (iii) we employ the advanced Monte Carlo Dropout method, introduced in~\citep{DBLP:conf/icml/GalG16}. %

\section{The Vision POMDP Problem}\label{sec:problem-statement}
We define \emph{vision POMDPs} (VPOMDPs) in close alignment with the informal definition of vision POMDPs by \citet{DBLP:conf/l4dc/DeglurkarLTSFT23}. 
VPOMDPs are POMDPs in which some of the observation variables correspond to high-dimensional images.
\begin{definition}[Vision POMDP]
\label{def:perceptionPOMDP}
   Let $\mathcal{M}$ be a POMDP with  a factored state and observation representation $\states = \times_{i\in[1,\ldots,n]} S_i$ and $\obs = \times_{i\in[1,\ldots,m]}\,Z_i$.
   Let $\mathsf{I}_\obs \subseteq [1,\ldots,m]$ be the indices of the observation variables that correspond to images.
   Then $\obsvis = \times_{i\in \mathsf{I}_\obs}\,Z_i$ are the possible \emph{vision observation components} and $\obsnonvis = \times_{i\in [1,\ldots,m]\setminus \mathsf{I}_\obs }\,Z_i$ are the possible non-vision observation components.
   We call $\mathcal{M}$ a \emph{vision POMDP} if $\obsvis \neq \emptyset$, and a \emph{pure vision POMDP} if $\obsnonvis = \emptyset$.
\end{definition}
\noindent

The key challenge for VPOMDPs lies in the massive size of the observation space, which is effectively unbounded (see \cref{ex: runningexample}). 
Technically, this challenge renders it intractable to specify the probability of each possible observation, which is necessary to formally model the observation function  $\obsfun$.
We therefore assume that the observation function is not fully known, but can only be approximated using data.
To make our problem tractable, we assume we have some knowledge about which state variables can have an effect on which observation variables.

We first introduce some additional notation.
For a vision POMDP, let $\mathsf{I}_\states$ denote the set of indices that factors the state space $\states$ into vision state components $\statesvis = \times_{i\in \mathsf{I}_\states}\,S_i$ and non-vision state components $\statesnonvis = \times_{i\in [1,\ldots,n] \setminus \mathsf{I}_\states}\,S_i$, such that the vision observation components $\obsvis$ are independent of the non-vision state components $\statesnonvis$, see \cref{fig:BayesianModel} for an illustration.
We write $\statevis$ and $\statenonvis$ for vision and non-vision components of a state $\s$, and analogously $\obvis$ and $\obnonvis$ for an observation~$\ob$.
Such a factorization is always possible, since independence between vision observation components and non-vision state components trivially holds for $\mathsf{I}_\states = [1\ldots n]$.
\cref{ass:factoredObservationFunction} ensures the factorization is meaningful.

\begin{figure}[tb]
  \centering
    \resizebox{0.4\columnwidth}{!}{\def\svgwidth{120pt}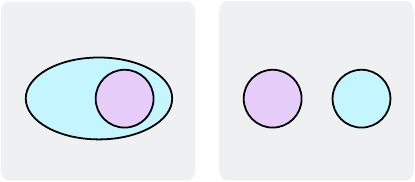}
  \caption{
  An illustration (as Bayesian network representation) of the observation function of a VPOMDP under \cref{ass:factoredObservationFunction}.
  Notably, $\obvis'$ may only depend on the visual variables of $\s'$, i.e. $\statevis'$, while $\obnonvis$ can depend on all variables of $\s'$.
  }
  \label{fig:BayesianModel}
  \Description{TODO}
\end{figure}

\begin{assumption}[Vision-factorizable.]
\label{ass:factoredObservationFunction}
A VPOMDP $\mathcal{M}$ is \emph{vision-factorizable} %
if the observation function $\obsfun$ factorizes into a vision observation function $\obsfunvis \colon \statesvis \to \Delta(\obsvis)$ and a non-vision observation function $\obsfunnonvis \colon \states \to \Delta(\obsnonvis)$, such that:
\begin{equation*}
    \obsfun(\ob \mid \s) = \obsfunvis\big(\ob_{\vis} \mid \s_{\vis} \big) \cdot \obsfunnonvis\big(\ob_{\nonvis} \mid \s\big). 
\end{equation*}

\end{assumption}

\paragraph{Interpretation}
Recall our motivating \cref{ex: runningexample}.
The vision component of the observation, $\obvis$, consists of camera images of the traffic light, while the non-vision component, $\obnonvis$, consist of audio input to detect an ambulance.
$\statevis$ describes all state components that affect $\obvis$, which is only the color of the traffic light.
The observations of the traffic light are independent of hearing the ambulance siren.
Thus, the probability of receiving a particular observation, $\obsfun(\ob \mid \s)$, can simply be written as the product of the probabilities of receiving the corresponding image and audio input, i.e. $\obsfunvis(\obvis \mid \statevis) \cdot \obsfunnonvis(\obnonvis \mid \s)$.
Thus, \cref{ass:factoredObservationFunction} holds.

\paragraph{Vision datasets}
\Cref{ass:factoredObservationFunction} allows us to decompose the observation function into two parts.
The non-vision observation function has a discrete and small domain, which means it can be learned or approximated using standard methods.
In this work, we assume the non-vision observation function is given, and instead focus on the more challenging task of dealing with the vision observation function, which cannot be fully learned due to its large, high-dimensional domain.
We assume to have access to a \emph{vision dataset}, which we define as follows:

\begin{definition}[Vision dataset]
\label{def:dataset}
A \emph{vision dataset} is a finite set of pairs 
\(
(\ob_{\vis}, \s_{\vis}) \in \obsvis \times \statesvis
\), 
where each vision component of an observation \(\ob_{\vis}\) is sampled from the conditional distribution defined by the vision observation function \(\obsfunvis\), given the vision component of the corresponding state \(\s_{\vis}\):
\[
    \dataset \subset \Big\{ (\ob_{\vis}, \s_{\vis}) \ \Big|\ \s_{\vis} \in \statesvis, \ \ob_{\vis} \sim \obsfunvis(\cdot \mid \s_{\vis}) \Big\}.
\]
\end{definition}

We highlight that \(\dataset\) is a strict subset of the full relation between vision components of states and their corresponding observations, since the complete set of all possible vision components of observations for all vision components of states is infeasibly large.  

For a given VPOMDP, a vision dataset can be constructed by sampling from the model or a simulator, if these are available.
Additionally, we can leverage existing vision datasets:
In the field of computer vision, vision datasets are made publicly available on a wide range of vision tasks, e.g. \citep{DBLP:conf/cvpr/DengDSLL009,DBLP:conf/icvgip/NilsbackZ08}. 
If public dataset images closely resemble a concrete problem (e.g., traffic light images for \cref{ex: runningexample}), we can use the images for that problem.

With such a dataset, we use the methods of \cref{subsec: perception} to train a perception model that predicts $\Pr(\statevis \mid \obvis)$, the likelihood of a state's vision component $\statevis \in \statesvis$ given an observation's vision component $\obvis \in \obsvis$.
In \Cref{sec:perc-based-belief-update}, we explain how these probabilities can be used to solve the following problem:
 
\begin{leftvrule}
\begin{problem*}[Perception POMDP problem]
Given a vision POMDP $\mathcal{M}$ satisfying \cref{ass:factoredObservationFunction}, with known model dynamics but unknown vision part of the observation function $\obsfun$, and given a vision dataset $\dataset$, compute a policy~$\pi$ that maximizes the expected cumulative discounted reward~$\val_\pi$.
\end{problem*}
\end{leftvrule}

\paragraph{Discussion of \cref{ass:factoredObservationFunction}}
Verifying Assumption 1 is part of the modeling process, as it depends on the semantic relationships between observation variables. Nevertheless, \cref{ass:factoredObservationFunction} can also be interpreted as an additional restriction on the policy rather than as an assumption about the environment:
We disallow the agent to use $\obvis$ to infer information about $\statenonvis$. %
Thus, the policies we find are valid (though possibly suboptimal) even if \cref{ass:factoredObservationFunction} is violated.
For instance, in \cref{ex: runningexample}, we may consider the location of the car to be a non-vision variable and the color of a traffic light as a vision variable.
In that case, our assumption does not hold: The images of the traffic light could be used to infer information about our location.
However, ignoring this information will still yield a valid policy that could perform reasonably well.

We emphasize that, unlike other works using factored representations~\cite{BOUTILIER200049,DBLP:conf/atal/KattOA19}, we do not require the existence of a factored representation of the transition function. 
Moreover, in contrast to approaches that aim to predict states directly from images, we do not rely on the \emph{block assumption}~\citep{DBLP:conf/icml/DuKJAD019,DBLP:conf/l4dc/SodhaniMP022,DBLP:conf/icml/ZhangSUWAS22}, which requires that each observation corresponds to just one state.
Lastly, our assumption is not related to those used for PORL (e.g., ~\cite{DBLP:journals/jmlr/SubramanianSSM22}).

\section{The Perception-based Beliefs for POMDPs Framework (\framework)}\label{sec:perc-based-belief-update}

This section formalizes the \emph{Perception-based Beliefs for POMDPs} framework~($\framework$).
\cref{subsec:beliefupdate} provides the core ingredient: the perception-based belief update that is able to handle massive observation spaces.
\Cref{subsec: uncertainty-aware} explains how we use uncertainty quantification to address an imprecise perception model.
\Cref{subsec:overall-framework} describes the overall framework, summarized in \cref{fig: model}, and embeds the perception-based belief update into various POMDP solvers.

\subsection{Belief Update for Vision POMDPs} \label{subsec:beliefupdate}
We recall the standard belief update, see e.g.~\cite[Chapter 16.4]{AIMA}. 
We denote $\Pr(\s' \mid b, a) \eqdef \sum_{\s \in \states} b(\s) T(\s' \mid \s, a)$.
For a current belief~$b_t$, the next belief $b_{t+1}$ upon selecting action $a$ and observing $\ob$ is: 
\begin{equation}
\label{eq:beliefupdatestandard}
\begin{aligned}
    b_{t+1}(\s') \eqdef %
    \frac{\obsfun(\ob \mid \s') \cdot \Pr(\s' \mid b_t, a)}{\sum_{\s'' \in \states} \obsfun(\ob \mid \s'') \cdot \Pr(\s'' \mid b_t, a)}.
\end{aligned}
\end{equation}
This update depends on the probabilities $\obsfun(\ob \mid \s)$; however, due to the massive observation space, the vision component $\obsfunvis(\obvis \mid \statevis)$ is unknown.
We rewrite $\obsfunvis$ using \emph{Bayes' rule}: %
\begin{equation}
\label{eq:obsfunbayes}
    \obsfunvis(\obvis \mid \statevis) = \frac{\Pr(\statevis \mid \obvis) \cdot \Pr(\obvis)}{\Pr(\statevis)}.
\end{equation}
$\Pr(\statevis \mid \obvis)$ can be approximated using a \emph{perception model} $\perc \colon \obsvis \to \Delta(\statesvis)$, such that $\Pr(\statevis \mid \obvis) \approx \perc(\statevis \mid \obvis)$.
We can obtain such a perception model by learning an image classifier using the vision dataset $\dataset$, defined in \cref{def:dataset}.
For the prior $\Pr(\obvis)$, the specific choice has no impact, because we prove below that the term cancels out in the belief update.
For $\Pr(\statevis)$, we choose the uninformative prior~\cite[pp. 41--43]{zellner1971}, i.e.\ a uniform distribution over $\statesvis$.
Intuitively, this means we do not introduce any bias into the approximation of the observation function, but solely rely on the perception model.
To show that this choice is appropriate, we show in \ifarxivelse{\cref{sec: belief upate 2.0}}{\cite[Appendix A]{arxivversion}} that the belief update can be viewed as a form of multiplicative \emph{opinion pooling}~\cite{10.1093/oxfordhb/9780199607617.013.37}.
In this view, we find that the uniform prior is the only choice that reproduces the standard belief update.

\paragraph{Perception-based belief update.}
We insert \cref{eq:obsfunbayes} into \cref{eq:beliefupdatestandard} to define a new belief update for vision POMDPs.
For simplicity, we first define the next belief $b_{t+1}^{pv}$ for \emph{\textbf{p}ure \textbf{v}ision POMDPs} where $\obsnonvis=\emptyset$ and thus $\s=\statevis$.
\begin{equation}
\label{eq:beliefupdatepure-derivation}
\begin{aligned}
    b_{t+1}^{pv}(\s') %
    & = \frac{ \frac{\Pr(\obvis)}{\Pr(\statevis')} \perc(\statevis' \mid \obvis) \cdot \Pr(\s' \mid b_t, a)}{\sum_{\s'' \in \states} \frac{\Pr(\obvis)}{\Pr(\statevis'')}\perc(\statevis'' \mid \obvis) \Pr(\s'' \mid b_t, a)}.
\end{aligned}
\end{equation}
As stated above, $\Pr(\obvis)$ cancels out. 
Moreover, since we assume a uniform prior over states, we have $\Pr(\statevis')=\Pr(\statevis'')$ for all pairs of states.
Thus, these also cancel out, leading to the simplified form:
\begin{equation}
\label{eq:beliefupdatepure}
\begin{aligned}
    b_{t+1}^{pv}(\s') %
    & = \frac{ \perc(\statevis' \mid \obvis) \cdot \Pr(\s' \mid b_t, a)}{\sum_{\s'' \in \states} \perc(\statevis'' \mid \obvis) \Pr(\s'' \mid b_t, a)}.
\end{aligned}
\end{equation}

In general, i.e.\ when $\obsnonvis \neq \emptyset $, we utilize the same idea and the fact that by \cref{ass:factoredObservationFunction}, we can rewrite the belief update in terms of $\obsfunvis$ and $\obsfunnonvis$ to obtain \cref{eq:beliefupdatefactoredFull}:
\begin{equation}
\label{eq:beliefupdatefactoredFull}
    b_{t{+}1}^\vis(\s_{\vis}') \eqdef 
     \frac{\perc(\s_\vis' \mid \obvis) {\cdot} \obsfunnonvis(\obnonvis \mid \s') {\cdot} \Pr(\s' \mid b_t, a)}{\smashoperator[r]{\sum_{\s'' \in \states}} \perc(\s_\vis'' {\mid} \obvis) {\cdot} \obsfunnonvis(\obnonvis {\mid} \s'') {\cdot} \Pr(\s'' {\mid} b_t, a)}.%
\end{equation}

We state in \cref{thm: factored} (proven in \ifarxivelse{\cref{app: factoredproof}}{\cite[Appendix B]{arxivversion}}) that if the image classifier was exact, \cref{eq:beliefupdatefactoredFull} allows to recover the standard belief update, thereby overcoming the problem that the massive observation space makes the standard belief update infeasible.

\begin{restatable}[Soundness of \cref{eq:beliefupdatefactoredFull}]
{theorem}{thmPerfectRocks}
    \label{thm: factored}
    Consider a vision POMDP satisfying \cref{ass:factoredObservationFunction}, and let $\perc$ be a perfect perception model, i.e.\ $\perc(\statevis {\mid} \obvis) = \Pr(\statevis {\mid} \obvis) $.
    Then, for every belief $b_t \in \Delta(\states)$, the next belief upon playing action $a\in\actions$ and observing $\ob\in\obs$ computed using \cref{eq:beliefupdatestandard,eq:beliefupdatefactoredFull} coincide, i.e.\ $b_{t+1}^\vis(\s') = b_{t+1}(\s')$.
\end{restatable}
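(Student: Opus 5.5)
The proof is a direct substitution into \cref{eq:beliefupdatestandard}, followed by cancellation of every factor that does not depend on the summation variable. Everything happens at one fixed $b_t$, action $a$ and observation $\ob = (\obvis,\obnonvis)$.

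First, apply \cref{ass:factoredObservationFunction} to the numerator and to each summand of the denominator of \cref{eq:beliefupdatestandard}. This replaces $\obsfun(\ob\mid\s')$ by $\obsfunvis(\obvis\mid\statevis')\cdot\obsfunnonvis(\obnonvis\mid\s')$, and likewise for $\s''$.

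Second, rewrite each $\obsfunvis(\obvis\mid\statevis)$ via Bayes' rule \cref{eq:obsfunbayes}. The joint distribution over $(\statevis,\obvis)$ is the one implicitly fixed in \cref{sec:perc-based-belief-update}: the prior $\Pr(\statevis)$ is uniform over $\statesvis$, and $\Pr(\obvis)$ is the corresponding marginal. The hypothesis $\perc(\statevis\mid\obvis)=\Pr(\statevis\mid\obvis)$ then lets us substitute $\perc$ exactly rather than approximately. This gives
\[
\obsfunvis(\obvis\mid\statevis) = \frac{\Pr(\obvis)}{\Pr(\statevis)}\,\perc(\statevis\mid\obvis) = \abs{\statesvis}\,\Pr(\obvis)\,\perc(\statevis\mid\obvis).
\]

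Third, observe that the factor $c \eqdef \abs{\statesvis}\Pr(\obvis)$ depends only on the fixed observation and not on the state. It therefore appears identically in the numerator and in every summand of the denominator, so it can be pulled out of the sum and cancelled. What remains is exactly the right-hand side of \cref{eq:beliefupdatefactoredFull}, which proves $b_{t+1}^\vis(\s') = b_{t+1}(\s')$. The belief on the left of \cref{eq:beliefupdatefactoredFull} is written $b^\vis_{t+1}(\s_\vis')$, but its right-hand side depends on the full $\s'$, so I read it as a distribution over $\states$. This is how the statement compares it with $b_{t+1}(\s')$.

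The one delicate point is that $c$ must be nonzero and finite, and the denominator must be positive. The case to rule out is $\Pr(\obvis)=0$. If the observation $\ob$ has zero probability under the predictive distribution, then \cref{eq:beliefupdatestandard} is itself undefined. So I will assume, as the standard update implicitly does, that $\ob$ has positive probability given $b_t$ and $a$. Under that assumption, some $\s''$ has $\obsfun(\ob\mid\s'')\Pr(\s''\mid b_t,a)>0$, so $\Pr(\obvis)>0$ and both denominators are positive.

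I will also note that with the uniform prior, $\Pr(\obvis)$ is strictly positive whenever $\obvis$ lies in the support of $\obsfunvis(\cdot\mid\statevis)$ for some $\statevis$. Hence Bayes' rule is well defined, and I do not expect any other obstacle.
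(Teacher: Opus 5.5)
Your proposal is correct and follows the paper's proof essentially step for step. You factor $\obsfun$ via \cref{ass:factoredObservationFunction}, rewrite $\obsfunvis$ by Bayes' rule with the perfect perception model, and cancel the state-independent factor $\Pr(\obvis)/\Pr(\statevis)$ using the uniform prior; your explicit constant $c=\abs{\statesvis}\Pr(\obvis)$ and the positivity check on $\Pr(\obvis)$ are a slightly cleaner rendering of the paper's cancellation step, which leaves these points implicit.
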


\subsection{Uncertainty Awareness}\label{subsec: uncertainty-aware}
In practice, we do not know the true probabilities $\Pr(\statevis \mid \obvis)$, but can only learn an approximation by training a perception model $f$ on a vision dataset $\dataset$.
Errors arise from two sources: 
Firstly, the vision dataset is necessarily incomplete, and thus the perception model has to generalize to unseen images.
Secondly, ideally every image would correspond to a unique 
$\statevis \in \statesvis$.%
\footnote{This assumption is related to, but weaker than, the one used in \emph{Block MDPs}~\citep{DBLP:conf/icml/DuKJAD019,DBLP:conf/l4dc/SodhaniMP022,DBLP:conf/icml/ZhangSUWAS22}, which requires uniqueness of the entire state $\s$ rather than only its vision component~$\statevis$.
}
Realistically, however, images may be corrupted (e.g.\ the traffic light being covered in snow), such that they cannot be assigned to a unique class.
Overall, the predicted probabilities are not perfect.
Therefore, it is important to know when these probabilities are likely to be incorrect, i.e., to quantify the uncertainty of the perception model.

To address these imperfections, we employ an \emph{uncertainty-aware perception model}.
It utilizes an uncertainty function $\ufun$ to measure the uncertainty of the perception model, and, intuitively, ignores the output of the perception model in case of high uncertainty.
Ignoring the perception model means assuming that for each vision state variable $\statevis\in\statesvis$, we assign uniform probability to every value in its domain: $U(\statevis)= \frac{1}{\abs{\statesvis}}$.

We define two \emph{uncertainty-aware} perception models:
Firstly, \emph{\textbf{t}hreshold-based \textbf{u}ncertainty \textbf{q}uantification} ($\tuq$) simply uses a \emph{threshold value}~$\varepsilon$ to determine which probabilities to use, closely matching the use of uncertainty quantification in existing literature \cite{DBLP:journals/corr/abs-2410-20432}.
\begin{equation}
\label{eq:beliefupdatethreshold}
\perc_{\tuq}(\statevis \mid \obvis) = \begin{cases}
    \perc(\statevis \mid \obvis) & \text{ if } \ufun(\obvis) \leq \varepsilon \\
    U(\statevis) & \text{ otherwise.}
\end{cases}
\end{equation}
However, $\tuq$ is sensitive to the choice of $\varepsilon$.
Thus, secondly, we define \emph{\textbf{w}eighted \textbf{u}ncertainty \textbf{q}uantification} ($\wuq$) to eliminate this dependency.
$\wuq$ provides a smooth interpolation between the perception model and the uniform distribution as a function of uncertainty. When the uncertainty is below 0.5, the perception model is considered trustworthy and influences $\wuq$; if the uncertainty exceeds this threshold, $\wuq$ disregards the perception output and defaults to the uniform distribution:
\begin{equation} 
\label{eq:beliefupdateweighted}
    \perc_{\wuq}(\statevis {\mid} \obvis) {=} \begin{cases}
        \ufun(\obvis) U(\statevis) {+} \big(1{-}\ufun(\obvis)\big) \perc(\statevis {\mid} \obvis) & \text{ if } \ufun(\obvis) {<} 0.5 \\
    \Pr(\statevis) & \text{ otherwise.}
    \end{cases}
\end{equation}

Both functions degenerate to $\perc$ if $\ufun=0$; thus, \cref{thm: factored} also applies when using uncertainty-aware perception models.
In cases where the output of the perception model is ignored, the belief update relies solely on the transition probabilities and the non-vision observation function, avoiding the inclusion of potentially unreliable information from the perception model.

In practice, the perception model may yield incorrect probabilities, and its associated uncertainty may not fully reflect this, potentially leading to the following corner case:
Let the support of a distribution $d$ be defined as $\support(d) = \{x \mid d(x) > 0\}$. 
If the support of the perception model and of the belief after propagation through the transition dynamics do not overlap, the belief update produces an empty belief. In this case, we adopt a uniform distribution over all states as an uninformative \emph{fallback belief}.

\subsection{Overall Algorithm}\label{subsec:overall-framework}
\begin{figure}[tb]
  \centering
    \resizebox{0.9\columnwidth}{!}{\def\svgwidth{320pt}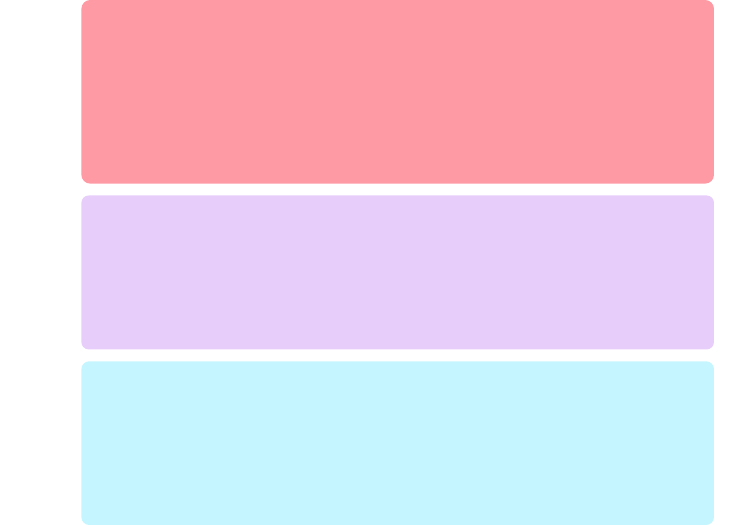}
  \caption{
  Overview of the \emph{Perception-based Beliefs for POMDPs} Framework (\framework). See \cref{subsec:overall-framework} for a detailed explanation.
  }
  \label{fig: model}
  \Description{TODO}
\end{figure}

We outline in \cref{fig: model} how the perception-based belief update is incorporated into the overall solution process of VPOMDPs.
We now provide a detailed description, split into (1) the planning cycle (middle, purple box), (2) the acting cycle (bottom, blue box), and (3) details on how different planners use the new belief update.

\paragraph{The planning cycle.}
A \textbf{POMDP Planner} maintains value estimates for different beliefs. 
Improving these estimates requires querying the next belief $b'$ upon playing action $a$ and receiving observation $\ob$ (see below and \ifarxivelse{\cref{app:planning-algos}}{\cite[Appendix C]{arxivversion}}).
However, when dealing with VPOMDPs, it is unclear how the vision component of an observation should be sampled during planning, as the distribution $\obsfunvis$ is unknown.
\framework addresses this by using an approximation of $\obsfunvis$ based on a subset $\Dplan$ of the \textbf{Vision Dataset}:
\begin{equation}
\label{eq:obsfunvisapprox}
    \obsfunvisapprox(\obvis {\mid} \statevis) = 
        \frac{|\{ (\obvis', \statevis') \in \Dplan \mid  \obvis' = \obvis, \statevis' = \statevis \}|}{|\{ ( \obvis',\statevis') \in \Dplan \mid \statevis' = \statevis \}|}. 
\end{equation}
Together with the known \textbf{Model Dynamics}, this approximation yields the \textbf{Planning Model} $\modelapprox$, a fully specified POMDP approximating the true VPOMDP.
Using this, the \textbf{POMDP Planner} can sample observations $\ob$.

In principle, the \textbf{Belief Update} could work on the \textbf{Planning Model}, using the known approximation of the observation function $\obsfunvisapprox$. 
However, as every vision component of an observation in the vision dataset is labelled with a unique $\statevis$, the planner would never encounter uncertainty about the vision component of the state. 
 Consequently, we instead apply the perception-based belief update (\cref{eq:beliefupdatefactoredFull}) already during planning, allowing the planner to reason about both the state information in the images and the uncertainty in their interpretation.
Naturally, this belief update uses the \textbf{Perception Model (with Uncertainty Quantification)}, trained%
\footnote{\framework can utilize different computer vision methods to learn the perception model. 
In particular, the training of the perception model could be bootstrapped with pretrained models if these are available for the problem setting.} 
based on the second subset $\Dperc$ of the \textbf{Vision Dataset}.

\paragraph{The acting cycle.}
The \textbf{POMDP Planner} computes a policy $\pi$.
Starting from the initial belief $b_0$, the action $a$ recommended by $\pi$ is executed in the \textbf{VPOMDP}, yielding the next observation $\ob$. 
This observation (and knowledge of the chosen action $a$ and previous belief $b$) is used by the perception-based \textbf{Belief Update} to compute the next belief $b'$.
The cycle repeats with the next action $\pi(b')$.

\paragraph{Planning with the new belief update.}
The perception-based belief update can be incorporated into many planning algorithms. 
To demonstrate this versatility, we describe how to do so for different approaches in 
\ifarxivelse{\cref{app:planning-algos}}{\cite[Appendix C]{arxivversion}}.
We provide a brief overview here.

For \emph{Point-based methods}, such as PBVI \cite{DBLP:conf/ijcai/PineauGT03}, HSVI \cite{DBLP:conf/uai/SmithS04,DBLP:conf/uai/SmithS05},  and SARSOP \cite{DBLP:conf/rss/KurniawatiHL08} the belief update affects two components of the algorithm: belief sampling and the backup operation.
We first introduce a non-standard way to represent the backup operator, which makes its use of the belief update explicit.
Then, for both belief sampling and the backup operator, we use $\modelapprox$ to approximate the possible observations and use \cref{eq:beliefupdatefactoredFull} to perform belief updates.

\emph{Particle filter based methods}, such as POMCP \cite{DBLP:conf/nips/SilverV10}, DESPOT \cite{DBLP:journals/jair/YeSHL17} and AdaOPS \cite{DBLP:conf/nips/WuYZYLLH21} use a set of particles as an approximation of the belief.
Most particle filters use the probabilities $\obsfun(\ob \mid \s)$ in the update step.
As with beliefs, we can use the perception model to approximate these probabilities.
We show that rejection sampling, such as used in POMCP, can be viewed as rejecting a particle $x$ with probability $1-\obsfun(\ob \mid x)$, which can be approximated as above. 
As in the belief update, $\Pr(\ob)$ does not affect these updates.

\emph{Deep RL} methods can utilize our belief update in a similar fashion as PSRL \cite{DBLP:conf/icmla/LanierXJZV24}. In that case, the belief represents the latent representation that is used by the agent.

We emphasize that, via \cref{thm: factored}, \framework’s replacement of the belief update preserves the solver's properties as long as the perception model is good enough.
The runtime is dominated by the chosen solver, as the (new) belief update itself is relatively inexpensive.

\section{Empirical analysis}\label{sec:experiments}

\begin{table}[tb]
\caption{
Average cumulative discounted rewards ($V$) and policy computation times ($t$) of different method on a number of benchmarks.
For the methods using $\framework$ or $\texttt{PSRL}$, training times of the perception model are not included.
}
\label{table:experiments}
\resizebox{\columnwidth}{!}{
\centering
\begin{tabular}{@{}lrrrrrrrrrrrrrrr@{}}
\toprule
&& \multicolumn{2}{c}{$\envtraffic$} && \multicolumn{2}{c}{$\envflowers$} && \multicolumn{2}{c}{$\envlake~(4)$} && \multicolumn{2}{c}{$\envlake~(8)$} \\
\cmidrule{3-4} \cmidrule{6-7} \cmidrule{9-10} \cmidrule{12-13}
\emph{Algorithms} && \multicolumn{1}{c}{$V$} & \multicolumn{1}{c}{$t$ (s)} && \multicolumn{1}{c}{$V$} & \multicolumn{1}{c}{$t$ (s)} && \multicolumn{1}{c}{$V$} & \multicolumn{1}{c}{$t$ (s)} && \multicolumn{1}{c}{$V$} & \multicolumn{1}{c}{$t$ (s)} \\ 
\cmidrule{1-1} \cmidrule{3-4} \cmidrule{6-7} \cmidrule{9-10} \cmidrule{12-13}
\texttt{PBP-HSVI}& & -3.57 & 405& & \textbf{56.6} & 412& & 0.64 & 50.5& & \textbf{0.31} & 445 \\
\texttt{tPBP-HSVI}& & -4.47 & 404& & 52.3 & 392& & 0.64 & 52.4& & \textbf{0.31} & 461 \\
\texttt{wPBP-HSVI}& & -4.11 & 406& & 56.5 & 447& & \textbf{0.65} & 53.2& & \textbf{0.31} & 454 \\
\texttt{tPBP-POMCP}& & -15.7 & 53K& & 30.8 & 105K& & 0.54 & 54K& & 0.26 & 108K \\
\cmidrule{1-1}
\texttt{PSRL-HSVI}& & \textbf{-3.41} & 303& & 45.1 & 308& & 0.64 & 13.2& & \textbf{0.31} & 318 \\
\texttt{DQN}& & -6.35 & 1.2K& & 34.5 & 3.9K& & 0.64 & 1.4K& & 0.01 & 2.1K \\
\texttt{PSRL-DQN}& & -4.03 & 1.7K& & 40.1 & 1.8K& & 0.64 & 1.7K& & \textbf{0.31} & 1.8K \\
\cmidrule{1-1}
\texttt{Oracle}& & -3.32 & 302& & 57.8 & 13.4& & 0.64 & 3.03& & 0.31 & 18.6 \\
\texttt{NoPerc}& & -15.4 & 319& & 24.3 & 325& & 0.45 & 336& & 0.23 & 334 \\
\bottomrule
\end{tabular}

}
\end{table}

In this section, we empirically evaluate our approach to answer the following questions:

\begin{questionenum}
    \item \label{q:performance} \textbf{Performance:} How does $\framework$ compare to existing baseline methods for vision POMDP tasks?
    \item \label{q:noiseresistance} \textbf{Robustness:} Is $\framework$ robust against visual corruption?
    \item \label{q:effectuq}
    \textbf{Uncertainty awareness:}
    How do different uncertainty awareness approaches affect the performance of $\framework$?
\end{questionenum}

We first give an overview of our experimental setup, then answer these questions.
Our code is available online~\cite{schafers_2026_18266744}, and further technical specifications are included in 
\ifarxivelse{\cref{app:exp-tech}}{\cite[Appendix F]{arxivversion}}. 

\subsection{Experimental Setup}

\paragraph{Benchmarks.}
We consider three environments. These have medium-sized state spaces but massive visual observation spaces that are sufficiently non-trivial to investigate the necessity and applicability of our framework.
All environments use a discount factor of $0.95$.
For further environment details, see \ifarxivelse{\cref{app: environments}}{\cite[Appendix D]{arxivversion}}.
\begin{itemize}[nosep]
    \item
        $\envtraffic$ is a custom environment that models \cref{ex: runningexample}.
        We use realistic traffic light images from~\cite{trafficlightclassifier}.
    \item 
        $\envflowers$ is a custom $5\times5$ gridworld where every state corresponds to a flower type from the 102 Category Flower Dataset \citep{DBLP:conf/icvgip/NilsbackZ08}.
        The goal of the agent is to pick a target flower and reach a goal while avoiding poisonous flowers.
        A visualization of the environment is given in \cref{fig:FlowerGrid Benchmark}.
    \item
        $\envlake$ is a variant of Gymnasium's \emph{FrozenLake} \citep{gymnasium2023}, but with an added latent variable that determines surface slipperiness.
        The agent observes this slipperiness and can infer its location using a top-down image of the environment.
        We consider the default $4\times4$ and $8\times8$ maps.
\end{itemize}

\paragraph{Vision datasets.}
For each environment, the corresponding vision dataset is split into three distinct parts: $\Dperc$ and $\Dplan$ as described in \cref{subsec:overall-framework},\footnote{For the end-to-end algorithm $\dqn$, we use $\Dperc\cup\Dplan$ for planning.}
, and $\Dact$, which is used to evaluate the computed policy.
Thus, the agent is evaluated with images that have not been used in its training, which simulates a situation where the observation space is unknown or intractably large.

\paragraph{Algorithms.}
We instantiate $\framework$ with two existing POMDP solvers: $\HSVI$ \cite{DBLP:conf/uai/SmithS05} and $\POMCP$ \cite{DBLP:conf/nips/SilverV10}, see \ifarxivelse{\cref{app:planning-algos}}{\cite[Appendix C]{arxivversion}} for implementation details.
$\thsvi$ and $\tpomcp$ utilize threshold-based uncertainty quantification (\cref{eq:beliefupdatethreshold}), with $
\epsilon=0.1$ unless noted otherwise.
$\whsvi$ uses weighted uncertainty quantification (\cref{eq:beliefupdateweighted}). 
Unless noted otherwise, we use Monte Carlo Dropout (MCDO) as uncertainty function.
\ifarxivelse{\cref{app: perception}}{\cite[Appendix E]{arxivversion}} details the DNN architectures of the perception model for each environment.
All DNNs achieve an accuracy of over $0.8$ on their test sets (see \ifarxivelse{\Cref{fig:overview_accuracy}}{Figure 11} in \ifarxivelse{\cref{app: perception}}{\cite[Appendix E]{arxivversion}}) and were trained in under $10$ minutes.
We compare to the following baselines:
\begin{itemize}[nosep]
    \item 
        $\dqn$ \cite{DBLP:journals/corr/HausknechtS15}, an end-to-end deep RL method.
        We use the \emph{Stable-Baselines3} \cite{stable-baselines3} and add memory via framestacking.
    \item 
        $\percdqn$ and $\perchsvi$, two implementation of the PSRL framework \cite{DBLP:conf/icmla/LanierXJZV24}.
        Like $\framework$, PSRL utilizes a perception model to compute probability distributions over states.
        These probabilities are interpreted as latent variables for $\percdqn$ and as beliefs for $\perchsvi$.
    \item 
        $\noperc$, a naive variant of $\HSVI$ that does not incorporate the vision component of observations in the belief update.
    \item 
        $\oracle$, an implementation of $\HSVI$ that has full observability of $\statevis$, mimicking a perfect perception model.
\end{itemize}
For all $\framework$- and $\texttt{PSRL}$-based methods, we first use the perception function to precompute probabilities for all images in our dataset.
Then, $\HSVI$-algorithms are restricted to a computational time of $300$s for policy computations, while $\tpomcp$ has a time budget of $600$s per step.
$\dqn$ and $\percdqn$ may take arbitrary time, only being restricted to $300,000$ environment interactions for training.
For the specific runtimes, see \cref{table:experiments}.
Our results represent the average after 1,000 episodes for all methods except $\tpomcp$, where we take the average over 10 episodes.

\begin{figure}[tb]
  \centering
  \includegraphics[width=0.40\textwidth]{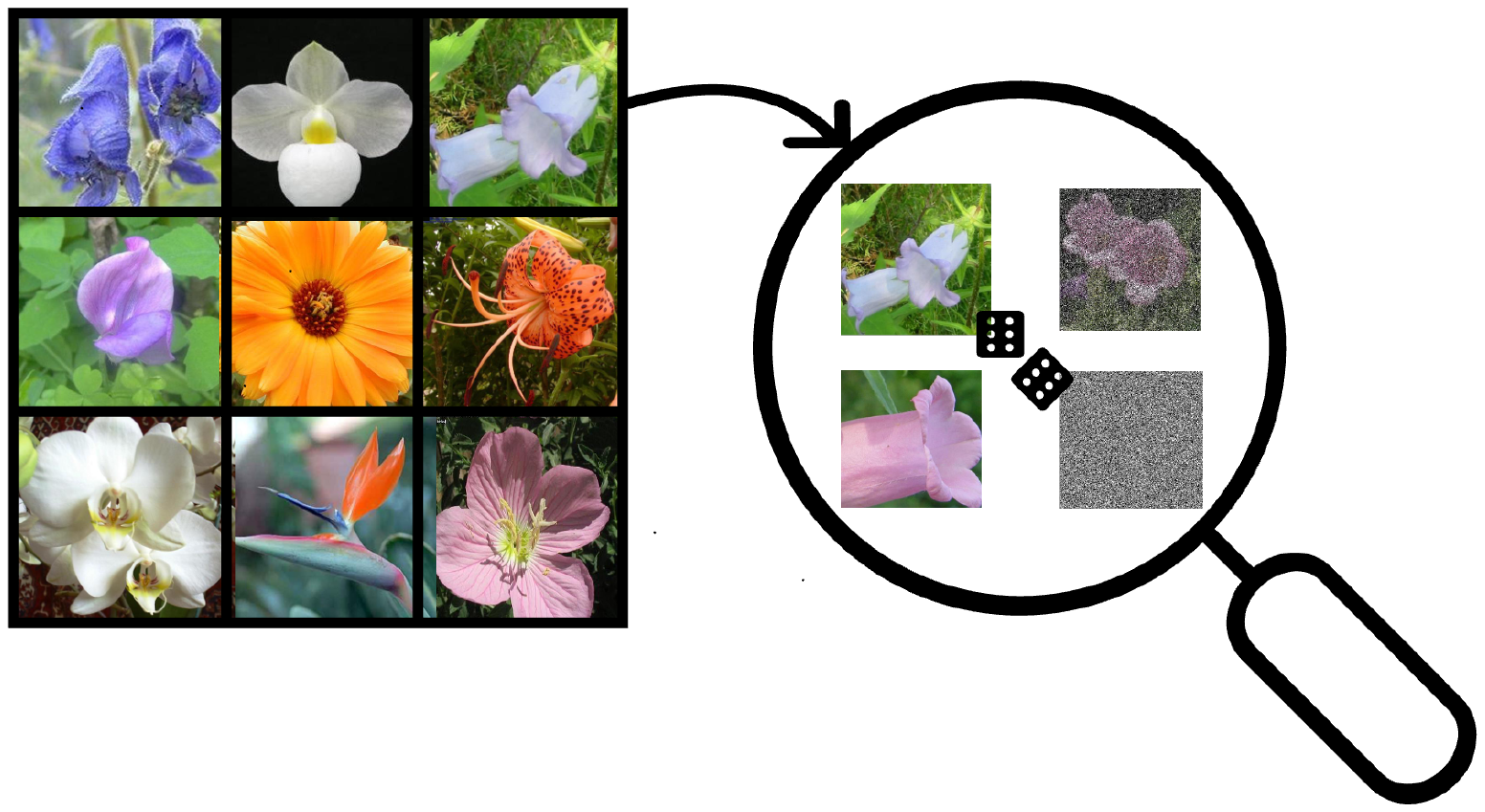}
  \caption{Visualization of $\envflowers$. 
  For each cell, the set of possible observations corresponds to images of a particular class in the 102 Category Flower Dataset \cite{DBLP:conf/icvgip/NilsbackZ08}.
  The magnifying glass shows two normal images, as well as an \emph{additive-noise} (top right) and \emph{pure-noise image} (bottom left) image.
  }
  \label{fig:FlowerGrid Benchmark}
\end{figure}

\begin{figure*}
    \centering
    \includegraphics[width=0.9\textwidth]{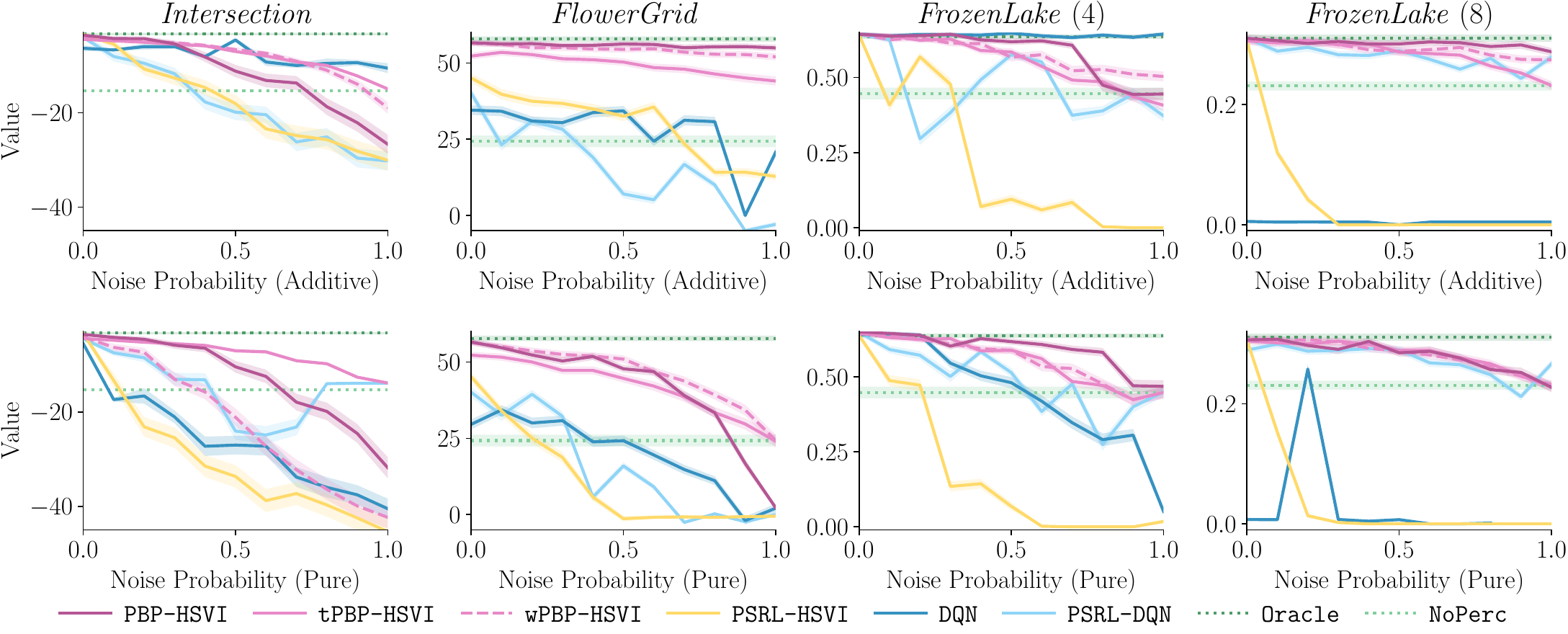}
    
    \caption{Average discounted returns (\emph{Value}) for different algorithms at different probabilities of receiving noisy observations.
    Additive noise refers to images that are correctly classified with $0.4$ probability, while full noise refers to pure salt-and-pepper images.
    Shaded areas show $95\%$ confidence in the value of the tested policy.}
    \label{fig:noise_comparison}
    \Description{TODO}
\end{figure*}

\subsection{Performance and Computational Cost}
\label{subsec:exp_performance}

To address \cref{q:performance}, we run all methods on our set of benchmarks and record both the average cumulative discounted reward~$V$ and the policy computation time $t$.
Recall that runtime depends more on the underlying solver than our framework (see end of \cref{subsec:overall-framework}).
Results are shown in \cref{table:experiments}.

{\textbf{$\framework\text{-}\hsvi$ outperforms existing methods.}
Across all benchmarks, $\pbphsvi$ performs on-par with or higher than all other algorithms, and only slightly lower than that of the $\oracle$ baseline.
Both $\thsvi$ and $\whsvi$ perform slightly worse than $\pbphsvi$, but comparably on all benchmarks except~$\envtraffic$.

The performances of the existing baselines ($\perchsvi$, $\dqn$, and $\percdqn$) is more inconsistent.
$\perchsvi$ performs roughly on-par with $\pbphsvi$ on all benchmarks except $\envflowers$.
The accuracy of our perception model is lower for $\envflowers$ than for the others, which causes more errors in the perception model.
Such errors have a larger impact on $\perchsvi$ than on $\framework$-based approaches, since these can instead use knowledge of the dynamics and previous belief to compute an accurate belief.
Despite their significantly larger time budget, $\dqn$ and $\percdqn$ never achieve higher returns than $\perchsvi$.
Moreover, $\dqn$ perform significantly worse on both $\envtraffic$ and $\envlake~(8)$.
These results suggest that combining our $\framework$ with belief-based solvers is competitive with deep learning methods for VPOMDPs.

{\textbf{$\tpomcp$ requires optimization to be competitive.}
Lastly, $\tpomcp$ outperforms $\noperc$ on all environments, but performs significantly worse than all other algorithms.
This comes from scalability limitations of the current implementation rather than any inherent weakness of the approach. 
An ablation study in \ifarxivelse{\cref{app:exp-more}}{\cite[Appendix G]{arxivversion}} supports this claim:
The particle filter of $\tpomcp$ accurately tracks the true belief across all environments, even under visual corruption (see \cref{subsec:exp_noise}).
Since $\framework$ only modifies the particle filter update, this indicates that the reduced performance stems from implementation inefficiencies rather than conceptual shortcomings.
A more optimized implementation that leverages efficient data structures
would yield substantially better results.
Therefore, we exclude $\tpomcp$ from the remaining experiments and provide additional results in \ifarxivelse{\cref{app:exp-more}}{\cite[Appendix G]{arxivversion}}.

\subsection{Robustness Against Visual Corruption}
\label{subsec:exp_noise}

Next, we investigate \cref{q:noiseresistance} by adding \emph{corrupted images} to our environments: For each dataset $\Dplan$ and $\Dact$, we create two additional variants:
\begin{enumerate*}
    \item a set of \emph{additive-noise images}, where we add salt-and-pepper noise to each image such that the perception model has an accuracy of roughly $0.4$, and 
    \item a set of \emph{pure-noise images}, where images are fully replaced by salt-and-pepper noise.
\end{enumerate*}
For an example of corrupted images, see \cref{fig:FlowerGrid Benchmark}.
Intuitively, additive-noise images are hard to classify, but the probabilities of the perception model could still contain valuable information; for pure-noise images, the probabilities from the perception model cannot be useful.
Further details on corrupted images are in \ifarxivelse{\cref{app: datasets}}{\cite[Appendix D.4]{arxivversion}}.
For our experiments, we select a portion of the images in $\Dplan$ and $\Dact$ according to a \emph{noise probability}, and replace these with corrupted variants.
The performance of our methods at different noise probabilities, for both types of corruption, is shown in \cref{fig:noise_comparison}.

\textbf{$\framework$ is robust against visual corruption.}
$\pbphsvi$, $\thsvi$ and $\whsvi$ perform well for $\envflowers$, $\envlake~(4)$, and $\envlake~(8)$ at all noise probabilities.
The performance of all $\framework$-based methods degrades more slowly than that of the other methods, and all three outperform the $\noperc$ baseline until almost all images are affected by noise.
Most notably, $\thsvi$ performs better than $\noperc$ in all our experiments.
Still, for $\envtraffic$, the performance of all three methods degrades more than in the other environments.
This is because there, the perception model is commonly overconfident for noisy images, i.e., it yields inaccurate but highly confident predictions (see \ifarxivelse{\cref{fig:overview_accuracy}}{Figure 11} in \ifarxivelse{\cref{app:acc-and-unc-perc}}{\cite[Appendix E.2]{arxivversion}}).
Such predictions cannot be detected by uncertainty quantification and lead to larger errors in the belief update.%

In comparison, the existing methods are less robust against noise:
While $\dqn$ is competitive for additive noise on $\envtraffic$ and $\envlake~(4)$, it performs worse in all other cases.
Similarly, $\percdqn$ is comparable to $\framework$ on $\envtraffic$ and $\envlake$, but is worse on $\envflowers$.
Both existing methods can perform significantly worse than just ignoring perceptions using $\noperc$.

\subsection{Uncertainty Quantification}

\begin{figure}[tb]
    \centering
    \includegraphics[width=0.9\columnwidth]{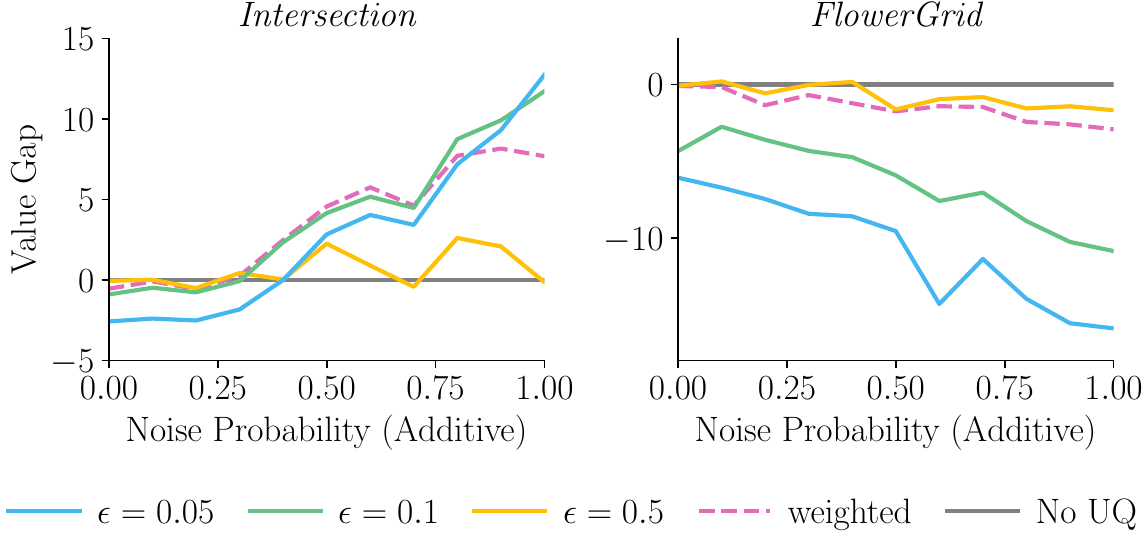}
    \caption{
    Value gap between $\pbphsvi$ and variants of $\whsvi$ and $\thsvi$ with different uncertainty thresholds, at different (additive) noise probabilities.
    }
    \label{fig:hyperparams}
    \Description{TODO}
\end{figure}

\begin{figure}[tb]
    \centering
    \includegraphics[width=0.9\columnwidth]{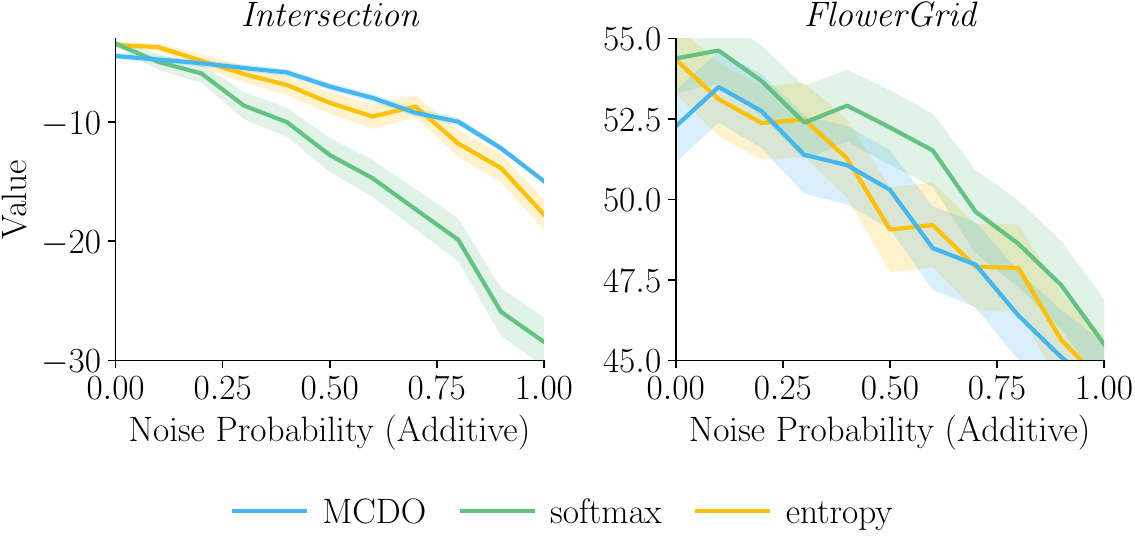}
    \caption{
    Value for $\thsvi$ using different uncertainty functions, at different (additive) noise probabilities.
    }
    \label{fig:UQmethods}
    \Description{TODO}
\end{figure}

To investigate \cref{q:effectuq}, we compare $\pbphsvi$ with different variants of our belief update and different uncertainty functions.

\textbf{Performance of UQ methods is environment-dependent.}
\Cref{fig:hyperparams} shows the value gap between the uncertainty-agnostic $\pbphsvi$ and with $\whsvi$, as well as several variants of $\thsvi$ with different thresholds $\epsilon$.
Results are mixed:
For $\envtraffic$, we find that lower thresholds improve performance at high noise probabilities, as expected.
However, for $\envflowers$, lower thresholds lead to worse performance; in particular, $\pbphsvi$ without uncertainty quantification (i.e. $\epsilon=1$) performs best.
We conjecture that the perception model for $\envflowers$ still provides a more accurate prediction than a uniform distribution, only being uncertain between a few of the many possible classes.
In such cases, even if the perception model has limited accuracy, it can be advantageous to use it.
$\whsvi$ performs well on both environments and does not require finetuning, which makes it an attractive choice.

Next, \cref{fig:UQmethods} shows the value of $\thsvi$ (with $\epsilon=0.1$) for the different uncertainty functions introduced in \cref{subsec: perception}.
All functions achieve roughly equal performance for $\envflowers$, with prediction confidence performing slightly better than the others.
However, prediction confidence performs significantly worse on $\envtraffic$.
We thus conclude that the optimal choice of uncertainty function depends on the specific environment, though MCDO and entropy seem to yield more consistent results.

\section{Conclusion}

$\framework$ is a principled and modular approach for solving vision POMDPs using belief-based POMDP solvers. 
By integrating the perception model and its associated uncertainty with the belief update, $\framework$ avoids the need to reason over massive observation spaces.
Our results demonstrate that using $\framework$ with the POMDP solver $\HSVI$ 
outperforms existing baselines.

\section*{Acknowledgements}
This work has been
partially funded by the ERC Starting Grant DEUCE (101077178).

\bibliographystyle{ACM-Reference-Format} 
\bibliography{References}

\clearpage
\ifarxivelse{
\appendix

\section{Further Justification for Uniform Prior Over States in Perception-Based Belief Update}
\label{sec: belief upate 2.0}

In the perception-based belief update (\cref{sec:perc-based-belief-update}), we are required to specify a prior over states.
This prior is the assumed probability distribution over states before having any evidence via observations. 
Using the uniform distribution is the standard uninformative prior~\cite[pp. 41--43]{zellner1971}.

Still, while the uniform distribution is the least informative, it introduces some information and thus its use has been questioned, as noted in~\cite{zellner1971} and discussed in detail in, e.g.,~\cite{eva2019principles}.
Thus, we provide a further justification without using the uniform prior by viewing the problem through the lens of \emph{opinion pooling} instead.
In particular, the belief update can be viewed as combining the information from two separate sources:
\begin{enumerate*}
    \item the previous belief plus the dynamics, and
    \item the received observation.
\end{enumerate*}
If we were to update considering only the former, then our (simplified) belief update would look as follows:
\begin{equation}
    \hat{b}_{t+1}(\s') =  \Pr(\s' \mid b_t, a)
\end{equation}
If we only consider the last observation instead, our beliefs would match exactly those of our perception model, i.e., \ be equal to $\perc(\statevis' \mid \obvis)$.
Following \cite{10.1093/oxfordhb/9780199607617.013.37}, the optimal method of combining these beliefs is to use \emph{multiplicative pooling}, which yields:
\begin{equation}
\label{eq:beliefupdatepureAlternative}
\begin{aligned}
    b_{t+1}^{pv}(\s') %
    & = \frac{ \perc(\statevis' \mid \obvis) \cdot \Pr(\s' \mid b_t, a)}{\sum_{\s'' \in \states} \perc(\statevis'' \mid \obvis) \Pr(\s'' \mid b_t, a)}.
\end{aligned}
\end{equation}

\Cref{eq:beliefupdatepureAlternative} is exactly \cref{eq:beliefupdatepure}, and thus provides an alternative way of deriving the belief update in the pure vision setting.
The general belief update can be derived analogously.

\section{Proof of Theorem \ref{thm: factored}} \label{app: factoredproof}
\thmPerfectRocks*

\begin{proof} 

First, for an arbitrary state $\s$ and observation $\ob$, we decompose the observation function and insert the perfect perception model, combining the idea of \cref{eq:obsfunbayes} and \cref{ass:factoredObservationFunction}.

\begin{align*}
    \obsfun(\ob \mid \s) &= \obsfunvis\big(\ob_{\vis} \mid \s_\vis \big) \cdot \obsfunnonvis\big(\ob_{\nonvis} \mid \s\big) 
    \tag{\cref{ass:factoredObservationFunction}}
    \\
    &= \frac{\Pr(\statevis \mid \obvis) \cdot \Pr(\obvis)}{\Pr(\statevis)} \cdot \obsfunnonvis\big(\ob_{\nonvis} \mid \s\big) 
    \tag{\cref{eq:obsfunbayes}}
    \\
    &= \frac{\perc(\statevis \mid \obvis) \cdot \Pr(\obvis)}{\Pr(\statevis)} \cdot \obsfunnonvis\big(\ob_{\nonvis} \mid \s\big) 
    \tag{$\perc$ being perfect}
\end{align*}

Inserting this into \cref{eq:beliefupdatestandard} and rewriting yields our goal, namely the equality with \cref{eq:beliefupdatefactoredFull}.

\begin{align*}
    &b_{t+1}(\s') \eqdef 
    \frac{\obsfun(\ob \mid \s') \cdot \Pr(\s' \mid b_t, a)}{\sum_{\s'' \in \states} \obsfun(\ob \mid \s'') \cdot \Pr(\s'' \mid b_t, a)}
    \tag{\cref{eq:beliefupdatestandard}}
    \\
    &= \frac
    {\frac{\perc(\statevis' \mid \obvis) \cdot \Pr(\obvis)}{\Pr(\statevis')} \cdot \obsfunnonvis(\obnonvis \mid \s') \cdot \Pr(\s' \mid b_t, a)}
    {\smashoperator[r]{\sum_{\s'' \in \states}} \frac{\perc(\statevis'' \mid \obvis) \cdot \Pr(\obvis)}{\Pr(\statevis'')} \cdot \obsfunnonvis(\obnonvis \mid \s'')  \cdot \Pr(\s'' \mid b_t, a)}
    \tag{Inserting the above}
    \\
    &= \frac{\frac{\Pr(\obvis)}{\Pr(\statevis)} 
    \cdot \perc(\statevis' \mid \obvis) \cdot \obsfunnonvis\big(\ob_{\nonvis} \mid \s'\big) \cdot \Pr(\s' \mid b_t, a)}
    {\frac{\Pr(\obvis)}{\Pr(\statevis'')} 
    \cdot \smashoperator[r]{\sum_{\s'' \in \states}} 
    \perc(\statevis'' \mid \obvis) \cdot \obsfunnonvis\big(\ob_{\nonvis} \mid \s''\big)  \cdot \Pr(\s'' \mid b_t, a)}
    \tag{Reordering and using $\Pr(\statevis) = \Pr(\statevis'')$ (uniform prior)}
    \\
    &= \frac{\perc(\statevis' \mid \obvis) \cdot \obsfunnonvis\big(\ob_{\nonvis} \mid \s'\big) \cdot \Pr(\s' \mid b_t, a)}
    {\smashoperator[r]{\sum_{\s'' \in \states}} 
    \perc(\statevis'' \mid \obvis) \cdot \obsfunnonvis\big(\ob_{\nonvis} \mid \s''\big)  \cdot \Pr(\s'' \mid b_t, a)}
    \tag{Cutting $\frac{\Pr(\obvis)}{\Pr(\statevis)}$}
    \\
    &\eqqcolon b_{t{+}1}^\vis(\s') \tag{\cref{eq:beliefupdatefactoredFull}}
\end{align*}

\end{proof}

\section{Algorithms}\label{app:planning-algos}

This section consists of two parts.
First, we give a conceptual explanation as to how $\framework$ can be incorporated into different planning algorithms.
Second, we provide technical details of the algorithms we used in our experimental evaluation.

\subsection{Planning with $\framework$}

\paragraph{Point-based value iteration.}
We consider $\hsvi$, a classic pomdp solver based on point-based value iteration.
In $\hsvi$ \cite{DBLP:conf/uai/SmithS04}, the observation function is invoked in two ways.
Firsly, during the exploration phase (Algorithm 2, step 2), $\hsvi$ samples possible next observations according to the excess uncertainty heuristic.
We use $\obsfunvisapprox$ to do this, but use \cref{eq:beliefupdatefactoredFull} to compute the corresponding beliefs.
Secondly, the backup operation (Algorithm 3) explicitly uses the observation function.
Using the same notation of \cite{DBLP:conf/uai/SmithS04}, the backup is defined as follows:
\begin{equation}
\begin{aligned}
    \beta_{a,o} &\gets \argmax_{\alpha \in \Gamma} (\alpha \cdot \tau(b,a,\ob)), \\
    \beta_a(\s)& \gets \rewardfun(\s, a) + \gamma \sum_{o,s'} \beta_{a,o}(s') \obsfun(\ob \mid \s') \trans(\s' \mid \s, a), \\
    \text{backup}(\Gamma, b)& \gets \argmax_{\beta_a}(\beta_a \cdot b),
\end{aligned}
\end{equation}
where $\tau(b,a,\ob)$ denotes belief $b'$, as reached from $b$ after action $a$ and observation $\ob$.
To incorporate this into $\framework$, we replace $\obsfun(\ob \mid \s')$ with $\obsfunvisapprox(\obvis \mid \statevis) \obsfunnonvis(\obnonvis \mid \s)$, and redefine $\tau(b,a,\ob)$ so that it matches the belief update of \cref{eq:beliefupdatefactoredFull}.

\paragraph{Particle filters.}

To illustrate how $\framework$ can be applied to particle filters, we consider two different methods.
Firts,  \emph{sequential importance sampling} (SIS) \cite{gordon1993novel} uses a weighted particle filter, which consists of a set of particles $X_t \in \{ x^i_t \mid x_t^i \in S \}$ with corresponding weights $W_t \in \{w^i_t \mid w^i_t \in \mathbb{R}$.
At each timestep, new particles get sampled and receive new weights as follows:
\begin{equation}
\begin{aligned}
\label{eq:particlefiltersample}
    x_{t+1}^i &\sim \trans(\cdot \mid s^i_{t},a)\\
    w_{t+1}^i &= C \cdot w_{t}^i \cdot \obsfun(\ob \mid x_{t+1}^i),
\end{aligned}
\end{equation}
with $C$ a normalization constant.
As in \cref{eq:beliefupdatefactoredFull}, we use \cref{ass:factoredObservationFunction} to replace $\obsfun$ with $ \frac{\Pr(\obvis)}{\Pr(\statevis')} \perc(x^i_{t+1} \mid \obvis) \obsfunnonvis(\obnonvis \mid x^i_{t+1})$.
Similar to the belief update, we see that $\Pr(\obvis)$ can be absorbed into the normalization constant $C$.
This approach can be applied to a wide range of particle filtering methods that use similar approaches \cite{DBLP:journals/tsp/ArulampalamMGC02,DBLP:journals/sac/DoucetGA00}. 

\begin{algorithm}[tb]
\caption{\textsc{POMCP Particle Filter Update}}
\label{alg:pomcpupdate}
\begin{algorithmic}[1]
\State \textbf{Input:} Particle set $X_t$, observation $\ob_t$, action $a_t$
\State $X_{t+1} = \emptyset$
\While{$|X_{t+1}| < K$}
    \State $x^i_t \sim X_t$
    \State $(x', \ob', r') \sim \mathcal{G}(x^i_t,a)$
    \If{$\ob = \ob'$}
        \State $X_{t+1} = X_{t+1} \cup \{ x' \}$
    \EndIf
\EndWhile
\State \textbf{return } $X_{t+1}$
\end{algorithmic}
\end{algorithm}
\footnotetext{
The particle filter update for $\pomcp$
}

Next, we consider the method used by the classic POMDP solver $\pomcp$.
As shown in \cref{alg:pomcpupdate}, $\pomcp$ uses a set of unweighted particles, and uses a generator $\mathcal{G}$ to approximate the model dynamics.
However, even with access to a perfect generator, the probability of sampling a matching observation is very small for vision POMDPs, making this approach intractable.
Instead, we note that with this sampling approach, the  probability of any particle $x'$ getting added to $X_t$ at each iteration can be separated as $\sum_{\s \in \states} \Pr(x^i_t = \s \mid \mathcal{G}, X_t) \Pr(x' \mid \mathcal{G}, \s, a) \Pr(\ob \mid \mathcal{G}, x')$.
We propose to use sampling for the first two probabilities, while we use approximate $\Pr(\ob \mid \mathcal{G}, x')$ using the perception model as before.
Concretely, this means replacing line 6 with the following two lines:
\begin{equation*}
\begin{aligned}
    &p \sim \text{Uniform}(0,1) \\
    &\textbf{if } p < \frac{\Pr(\obvis)}{\Pr(\statevis')} \perc(x' \mid \obvis) \cdot \obnonvis(\obnonvis \mid x') \textbf{ then }
\end{aligned}
\end{equation*}
We note that $\Pr(\obvis)$ is a factor for each possible particle $x' \in \states$.
Thus, it affects the probability of each particle being chosen equally, and will not have an impact on $X_t$.

\subsection{Implementation and Configuration}

Next, we provide a short description with technical details for all used algorithms. 
All our code is included in the supplementary materials.

\paragraph{\hsvi.}
We use a Python-based custom implementation of HSVI.
Our implementation roughly follows the description of HSVI \cite{DBLP:conf/uai/SmithS04}, with a number of minor alterations:
\begin{enumerate}[nosep]
    \item We use the sawtooth approximation \citet{DBLP:journals/jair/Hauskrecht00} for computing upper bounds, and use a sparse implementation of $\alpha$-vectors. Both of these alterations were previously introduced HSVI2 \cite{DBLP:conf/uai/SmithS05}.
    \item Due to our relatively large planning observation space, our belief tree has a large branching number.
    To deal with this, we implement the following:
    \begin{enumerate*}
        \item 
            We prune all subtrees in our belief tree that are provably suboptimal, i.e. all beliefs $b'$ reached via a belief-action pair $(b,a)$ and where we have:
            $ \exists a' \colon \overline{Q}(b,a) < \underline{Q}(b,a')$.
            This method originates from the POMDP-solver SARSOP \cite{Kurniawati2022}.
        \item 
            We prune both beliefs and $\alpha$-vectors at each iteration.
    \end{enumerate*}
    \item 
        To allow for more broad exploration of the belief space, we employ $\epsilon$-optimal exploration: we pick actions and observations using HSVI's forward exploration heuristics with probability $1-\epsilon_{explore}$, and pick randomly otherwise.
        Our experiments use $\epsilon_{explore} = 0.1$.
\end{enumerate}

\paragraph{\pomcp.}
We use a refactored version of a publically available Python implementation of $\pomcp$.
\footnote{For the original code, see: \url{https://github.com/namoshizun/PyPOMDP}.}
We use a 1000 particles with an invigoration rate of $0.05$, and use UCB1 \cite{DBLP:journals/ml/AuerCF02} to guide exploration.
For the rollout policy, we pick a random action with probality $0.2$.
Otherwise, we sample a random state $s'$ using the probabilities of the perception model and take the action that would be optimal for $s'$ in an MDP.

\paragraph{\dqn.}
Both tested variants of $\dqn$ use the stable baselines~ 3 implementation \cite{stable-baselines3}.
In particular, $\dqn$ uses the standard CNN policy type, using an image size of $64x64$ pixels.
$\percdqn$ uses the standard MLP policy type, using the prediction of a pre-trained perception model as input.
To encourage exploration, we set the exploration fraction to $0.8$, initial exploration rate to $0.8$ and final exploration rate to $0.05$.
For all other variables we use the default setting.

\section{Benchmarks} \label{app: environments}

In this section, we provide a detailed description of all used benchmarks.
Before, we give a high-level overview of their structure.

\subsection{FrozenLake} \label{app: frozenlakeextended}

This environment extends the classical FrozenLake MDP from the \texttt{gymnasium} library  \citep{gymnasium2023} by introducing partial observability, dynamic slippery conditions, and stochastic visual observations.

The agent navigates on a $4 \times 4$ or $8\times8$ grid. The true state consists of two components:
\begin{itemize}
    \item The agent’s discrete position on the grid.
    \item A binary slippery variable indicating whether the environment is slippery.
\end{itemize}

\paragraph{Observations.}

The agent does not directly observe the discrete state. 
Instead, it receives:
\begin{itemize}
    \item An image representing the agent’s current grid cell. 

    \item A deterministic binary sensor reading of the slippery variable.
\end{itemize}

\paragraph{Transition Dynamics.}

\begin{itemize}
    \item The agent can take actions \emph{north}, \emph{east}, \emph{south} and \emph{west}.
    \item When not slippery, actions deterministically move the agent in the given direction.
    \item When slippery, the agent’s instead moves to one of the adjacent cardinal directions with a $0.5$ probability
    \item The slippery state changes with probability $0.5$ at each time step to slippery, independently of the agent’s action.
\end{itemize}

\paragraph{Reward Structure.}

The agent receives a reward of 1 upon reaching the goal state, and zero reward otherwise. Episodes terminate when the agent reaches the goal or falls into a hole, analogous to the classical FrozenLake environment.

\subsection{FlowerGrid} \label{app: flowers25}

The \emph{FlowerGrid} environment is defined on a $5 \times 5$ grid, where each cell corresponds to a unique flower class drawn from the 102 Category Flower Dataset \citep{DBLP:conf/icvgip/NilsbackZ08}, specifically classes 1 through 25. Example images of the dataset are shown in \cref{fig: flowers25}.
The true state consists of two components: the agent’s current position (represented using row-major ordering) and a binary variable indicating whether the target flower has already been picked.
The agent begins in the top-left cell and aims first to navigate the grid to locate and pick exactly one instance of a specific target flower class.
Once this flower is picked, the agent must reach the goal position in the bottom-right cell to successfully complete the episode.
Toxic flowers are scattered around the target flower (at positions $2,5,8,11,17,19,20,23$)
while the target flower is positioned at cell 20.
The episode terminates only after the agent reaches the goal cell located at position $25$.

\paragraph{Observations.}
The agent does not directly observe the full state. 
Instead, it receives an image depicting the flower class at its current location, sampled from the set of possible images for this class.
Representative examples for state 20 are shown in \cref{fig: flowers25_state20}. 
Additionally, the agent has full access to a binary variable indicating whether the target flower has been collected.

\paragraph{Transitions \& reward.}
Transitions in this environment are stochastic to reflect uncertainty in movement.
For movement actions—up, down, left, and right—the agent moves to the intended neighboring cell with probability $0.6$, assuming the move is within grid boundaries.
With a complementary $0.4$ probability, the agent transitions randomly either to its current state or to one of the neighboring states, simulating uncertainty analogous to slipperiness.
The pick action modifies the state only when the agent is located at the target flower’s position and has not previously picked it. 
In this case, the target flower is marked as picked, and the agent receives a reward of +10. If the agent attempts to pick a poisonous flower, the episode terminates immediately with a large negative reward of -10.
Attempts to pick any other flower or to pick the target flower multiple times yield a penalty of -1 without changing the state.

After successfully picking the target flower, the agent must navigate to the goal cell located in the bottom-right corner. 
Upon arrival, the episode ends, awarding the agent a substantial positive reward of +100.

\subsection{Intersection} \label{app: trafficlightenv}

Based on \cref{ex: runningexample}, the \emph{Intersection} environment models a discrete MDP over a linear road segment with six non-terminal and one terminal positions, combined with a traffic light state and siren status. 
The state space is the Cartesian product of three components: the traffic light color (green, red, yellow), the agent’s position along the road (positions $0$ through $5$ plus $-1$ indicating terminal), and a binary siren status (on or off).

\paragraph{Observations.}  
The agent receives a composite observation consisting of three components:

\begin{itemize}
  \item \textbf{Visual traffic light color:} An image depicting the current traffic light color (green, red, or yellow), randomly sampled from a set of images associated with that class. These images come from a traffic light dataset \cite{trafficlightclassifier}, where each image is labeled according to its light state.
  
  \item \textbf{Deterministic position:} The agent always observes its exact position along the road. The position is a discrete value between 0 and 5, or $-1$ indicating that the agent has reached the terminal state.
  
  \item \textbf{Noisy siren observation:} the agent receives a potentially noisy indication of whether a siren is active. 
  If the true siren state is \emph{coming}, then the agent always observes “coming”, but they can observe either “coming” or “none” with probability $0.5$ if not the siren state is \emph{none}.
  This models limited perceptual accuracy in detecting siren sounds.
\end{itemize}

\paragraph{Transitions \& reward.}  
Transition uncertainty arises both in the behavior of the traffic light and the siren. 
The traffic light changes state according to predefined probabilities depending on its current color.
More precisely, if the light is red, it remains red with probability $0.8$ or changes to yellow probability $0.3$;
if yellow, it always changes to green;
and if green, it remains green with probability $0.6$ or switches to red with $0.4$ probability. 
The siren toggles stochastically between active and inactive states: if the siren is currently off, it stays off with probability $0.8$ and turns on with probability $0.2$;  it is on, it remains on with probability $0.8$ and switches off with probability $0.2$. 
The agent can choose to wait or move backward by one or two positions.
Rewards are given based on reaching the terminal position -1:  
\begin{itemize}
  \item Reaching terminal position under green or yellow light without siren: 0 reward.
  \item Reaching terminal position under red light: -100 penalty.
  \item Additional penalty of -200 if the siren is active upon arrival.
  \item Waiting (action 0) incurs a small negative reward of -1.
\end{itemize}

\subsection{Vision datasets} \label{app: datasets}

\paragraph{FrozenLake}
To construct the vision dataset for the \textit{ FrozenLake} environment, we generated images for all possible state-action pairs in the $4\times4$ and $8\times8$ grid world.
Each state-action pair was rendered as an RGB image, forming the complete observation space for the perception model. 
The resulting dataset contains 384 and 1536 images, respectively, of which we use $0.5$ for training, $0.1$ for validation, and $0.4$ for testing.

\paragraph{FlowerGrid}
The vision dataset for the \emph{FlowerGrid} environment is derived from the publicly available 102 Category Flower Dataset \citep{DBLP:conf/icvgip/NilsbackZ08} by selecting only the first 25 classes.
We retained the original dataset split.
\paragraph{Intersection} For the \emph{Intersection} environemnt, we use the dataset from \cite{trafficlightclassifier}, which contains images of traffic lights classified into three categories: red, yellow, and green. We adopt the original train-test split provided in the repository for our experiments. 
Example images are visualized in \cref{fig: trafficlights}.

\paragraph{Visual corruption.}
For all datasets, visual corruption is introduced using salt-and-pepper noise. Specifically, a subset of image pixels is randomly selected according to an environment-specific noise ratio, and each selected pixel is then set to either white or black with equal probability.
We use a noise ratio of $0.4$ for $\envtraffic$, $0.3$ for $\envflowers$,
$0.2$ for $\envlake(8)$,
and $0.3$ for $\envlake(4)$.

\begin{figure*}[tp]
  \centering
  \includegraphics[width=0.8\textwidth]{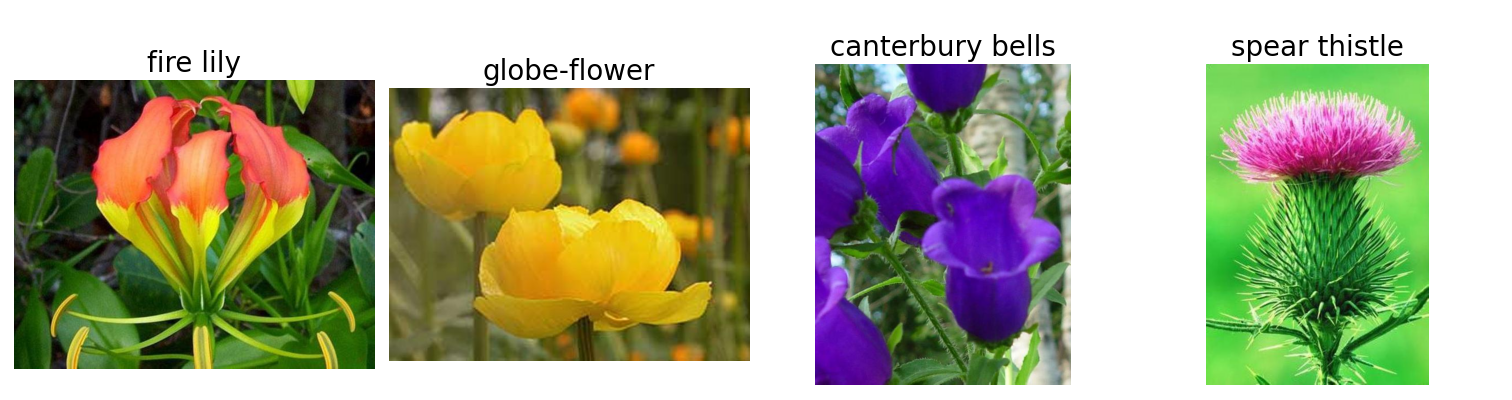}
  \caption{Four images of the 102 Category Flower Dataset \citep{DBLP:conf/icvgip/NilsbackZ08}. }
  \label{fig: flowers25}
  \Description{TODO}
\end{figure*}
\begin{figure*}[tp]
  \centering  \includegraphics[width=0.8\textwidth]{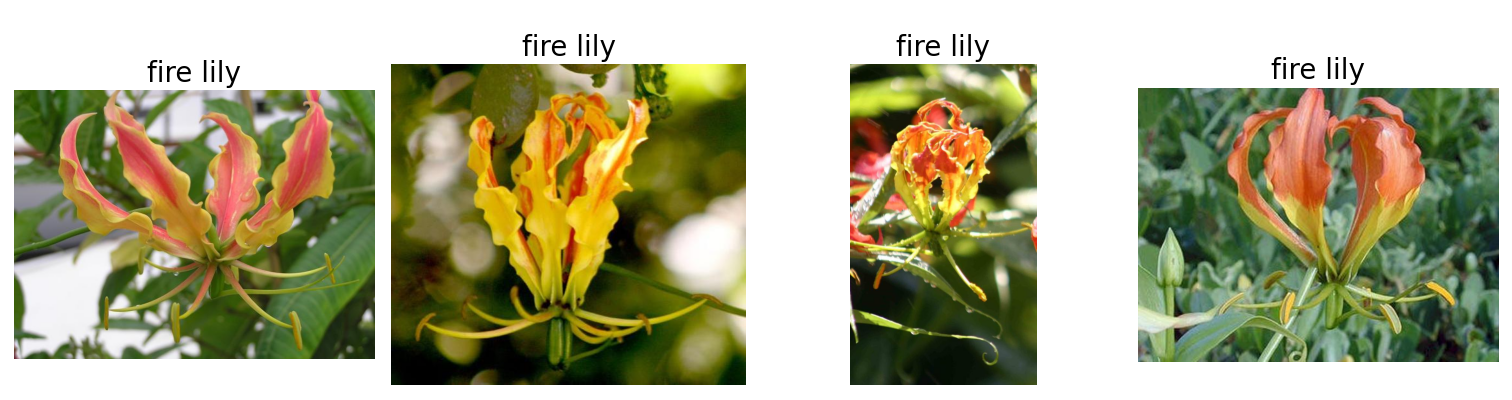}
  \caption{Four sampled images for state 20 of the \emph{FlowerGrid} environment. }
  \label{fig: flowers25_state20}
  \Description{TODO}
\end{figure*}
\begin{figure*}[tp]
  \centering
\includegraphics[width=0.6\textwidth]{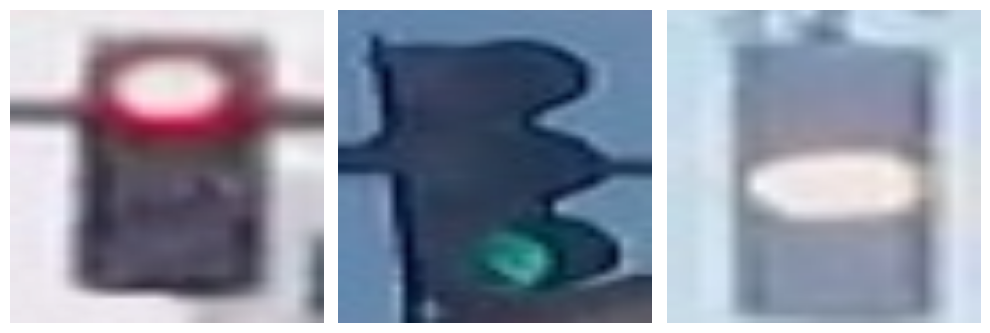}
  \caption{Images from \citet{trafficlightclassifier} used as image observations in the \emph{Intersection} environment.  }
  \label{fig: trafficlights}
  \Description{TODO}
\end{figure*}

\section{Perception models} \label{app: perception}
\paragraph{FlowerGrid}
We employed the EfficientNet-B2 architecture \citep{DBLP:conf/icml/TanL19} as the backbone of our perception model for the 102 Category Flower Dataset. 
The network was initialized with ImageNet (\cite{DBLP:conf/cvpr/DengDSLL009}) pretrained weights and modified to output predictions for 25 classes. 
Training was conducted using the Adam optimizer with a learning rate of 1e-4 and cross-entropy loss over 20 epochs.
A batch size of 5 was used, and validation accuracy was monitored after each epoch to detect overfitting.
Input images were resized to 128×128 pixels and normalized according to ImageNet standards. 
Data augmentation including random cropping, horizontal flips, rotations, and color jittering was applied during training to enhance generalization.
\paragraph{FrozenLake}
For the \emph{FrozenLake} perception task, we utilize a ResNet-18 convolutional neural network \citep{DBLP:conf/cvpr/HeZRS16}.
The network was initialized with pretrained ImageNet weights and adapted by replacing the final fully connected layer with a dropout layer (p=0.5) followed by a linear classifier mapping to 16 discrete environment states. 
The model was trained using the Adam optimizer with a learning rate of 1e-3, and categorical cross-entropy as the loss function. 
Training was performed over 20 epochs with early stopping based on validation accuracy, using a patience of two epochs.
All input images were resized to 224×224 pixels and normalized using standard ImageNet statistics. 
To improve generalization and mitigate overfitting, data augmentation was applied during training.
\paragraph{Intersection}
For the \emph{Intersection} environment, we train a ResNet-18 convolutional neural network \citep{DBLP:conf/cvpr/HeZRS16}.
The network is initialized with ImageNet-pretrained weights and adapted by replacing the final fully connected layer with a 128-dimensional hidden layer, followed by ReLU activation, dropout ($p=0.3$), and a final linear classifier over three classes.
We freeze the convolutional backbone during training and optimize only the classification head using the Adam optimizer with a learning rate of $10^{-3}$.
Training is performed over 5 epochs using cross-entropy loss, with an 80/20 train-validation split.
All images are resized to $224 \times 224$ pixels and normalized using ImageNet statistics.
\subsection{Temperature Scaling}
To calibrate the confidence estimates of the neural network predictions, we applied temperature scaling—a post-processing technique that adjusts the softmax outputs by dividing the logits by a learned scalar temperature parameter 
$T$, implementated by \citep{pleiss2017temperature}.
For the \emph{FrozenLake} dataset, the best calibration was achieved with $T=1.222$, while for the \emph{FlowerGrid} dataset, the optimal value was 
$T=0.984$. For the \emph{Intersection} the optimal temperature was $T=0.961$.

\subsection{Accuracy and Uncertainty of the perception models} \label{app:acc-and-unc-perc}
In \cref{fig:overview_accuracy}, we compare the accuracy and uncertainty of the perception models for the vision datasets under different degrees of visual corruption.
\begin{figure*}[tp]
  \centering
  \begin{subfigure}[b]{0.45\textwidth}
    \centering
    \includegraphics[width=\linewidth]{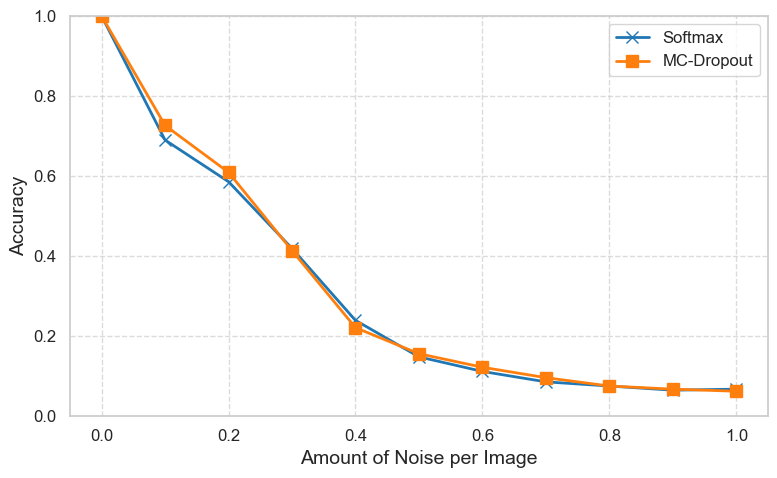}
    \caption{\emph{FrozenLake(4)} Accuracy}
    \label{fig:traf_acc}
  \end{subfigure}
  \hfill
  \begin{subfigure}[b]{0.45\textwidth}
    \centering
    \includegraphics[width=\linewidth]{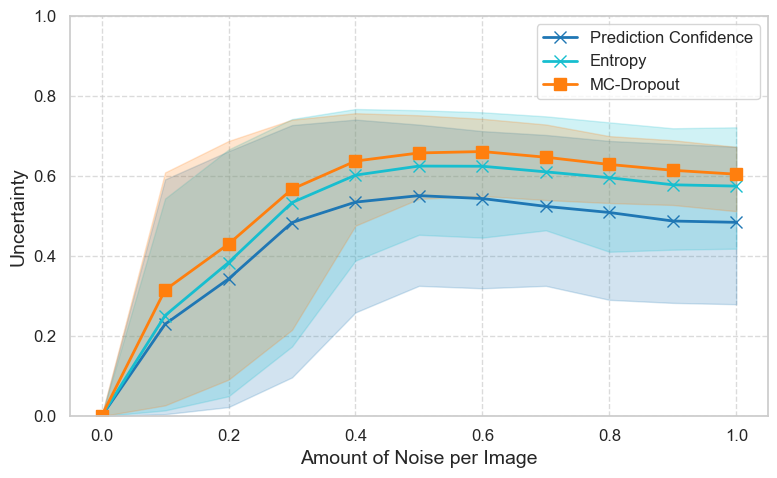}
    \caption{\emph{FrozenLake(4)} Uncertainty}
    \label{fig:traf_ent}
  \end{subfigure}

  \vspace{6pt}

  \begin{subfigure}[b]{0.45\textwidth}
    \centering
    \includegraphics[width=\linewidth]{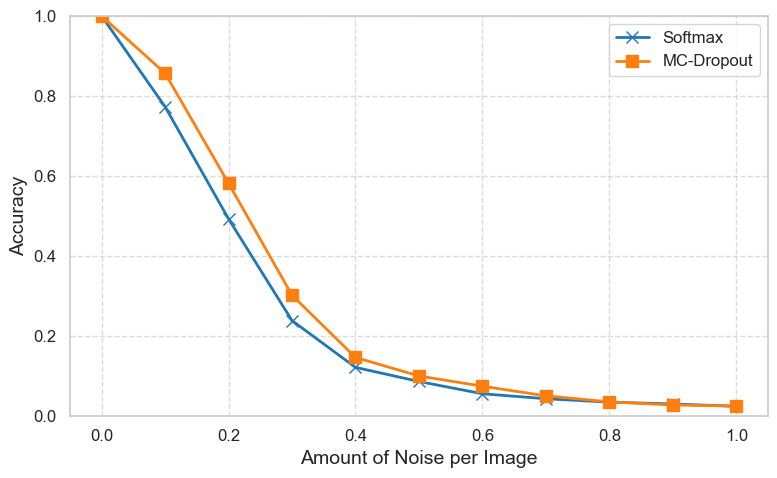}
    \caption{\emph{FrozenLake(8)} Accuracy}
    \label{fig:flow_acc}
  \end{subfigure}
  \hfill
  \begin{subfigure}[b]{0.45\textwidth}
    \centering
    \includegraphics[width=\linewidth]{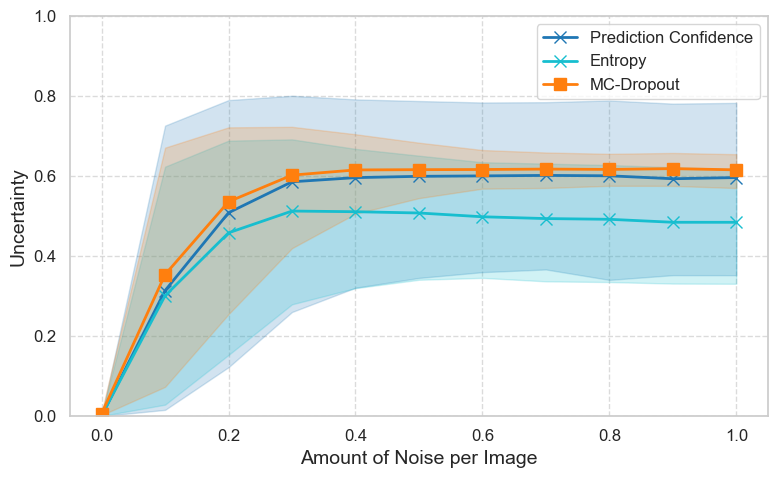}
    \caption{\emph{FrozenLake(8) Uncertainty}}
    \label{fig:flow_ent}
  \end{subfigure}

  \vspace{6pt}

  \begin{subfigure}[b]{0.45\textwidth}
    \centering
    \includegraphics[width=\linewidth]{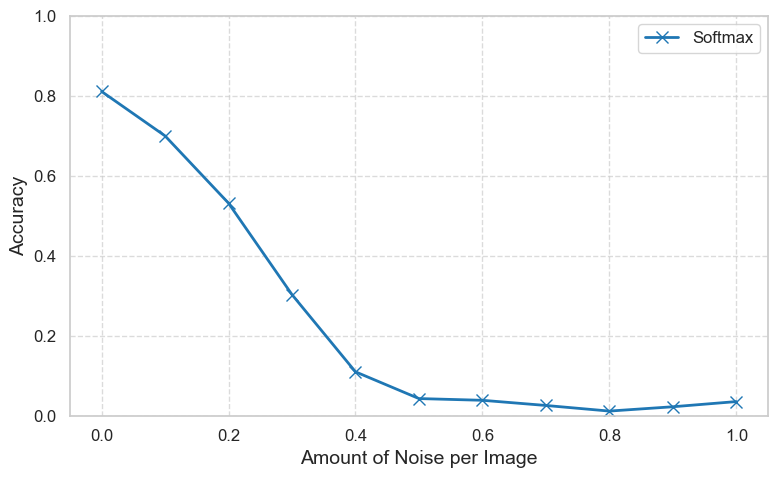}
    \caption{\emph{FlowerGrid} Accuracy}
    \label{fig:fl_acc}
  \end{subfigure}
  \hfill
  \begin{subfigure}[b]{0.45\textwidth}
    \centering
    \includegraphics[width=\linewidth]{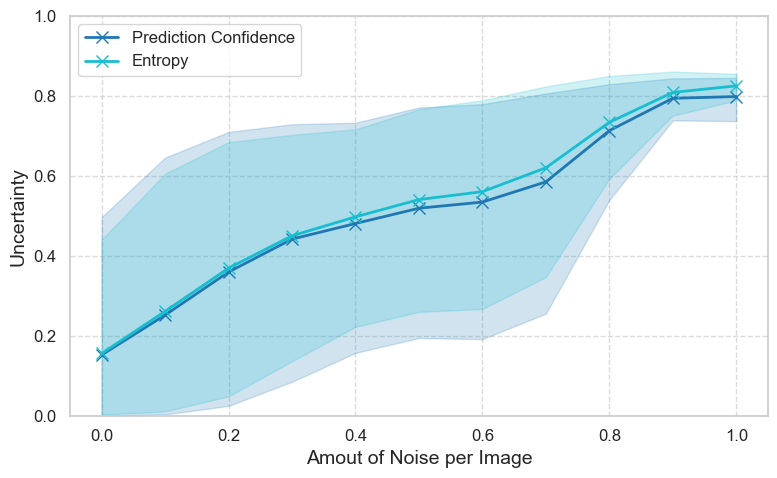}
    \caption{\emph{FlowerGrid} Uncertainty}
    \label{fig:fl_conf}
  \end{subfigure}

  \vspace{6pt}

  \begin{subfigure}[b]{0.45\textwidth}
    \centering
    \includegraphics[width=\linewidth]{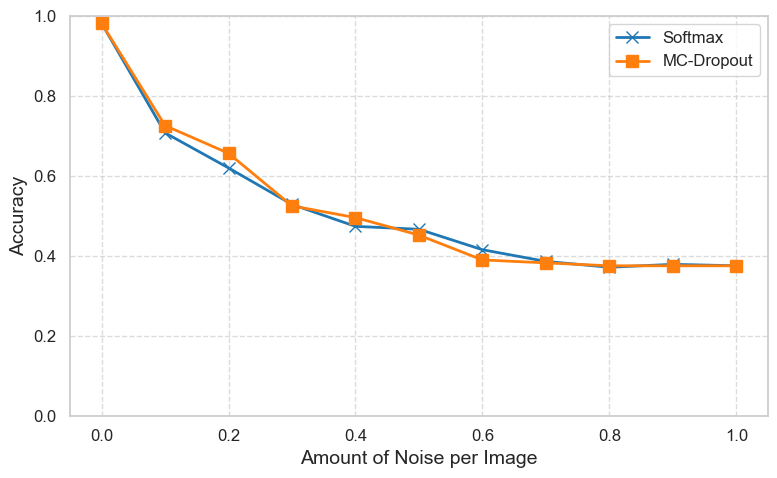}
    \caption{\emph{Intersection} Accuracy}
    \label{fig:fl_ent_sm}
  \end{subfigure}
  \hfill
  \begin{subfigure}[b]{0.45\textwidth}
    \centering
    \includegraphics[width=\linewidth]{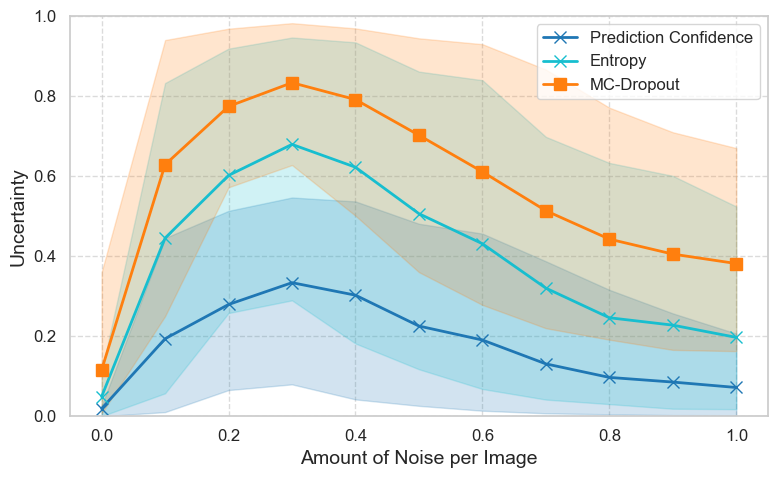}
    \caption{\emph{Intersection} Uncertainty}
    \label{fig:mcdo_ent}
  \end{subfigure}

  \caption{Comparison of the accuracy and uncertainty of the perception model on our benchmarks to the amount of pixels corrupted by salt-and-pepper noise.}
  \label{fig:overview_accuracy}
\end{figure*}

\section{Technical specifications}\label{app:exp-tech}

Our experiments were run on two machines: one more well-suited to GPU tasks, and one more well-suited for CPU tasks.
We provide the specifications of both:

\paragraph{Machine 1 ($\dqn$ algorithm):}
\begin{itemize}
    \item CPU: Intel Core i7-1260P, 12th gen
    \begin{itemize}
        \item 12 cores, 24 threads
        \item 448KiB L1d cache, 640KiB L1i cache, 9MiB L2 cache, 18MiB L3 cache.
    \end{itemize}
    \item RAM: 2x 16GiB SODIMM DDR4 3200 MHz
    \item GPU: NVIDIA Quadro T550
    \item Operating System: Ubuntu 22.04.5 LTS
\end{itemize}

\paragraph{Machine 2 (all other algorithms):}
\begin{itemize}
    \item CPU: 3.00 GHz Intel Core i9-10980XE
    \begin{itemize}
        \item 18 cores, 36 threads
        \item 576KiB L1d cache, 576KiB L1i cache, 18MiB L2 cache, 24.8MiB L3 cache.
    \end{itemize}
    \item RAM: 8x 32GiB DIMM DDR4 3200 MHz
    \item GPU: NVIDIA GeForce RTX3090
    \item Operating System: Ubuntu 22.04.1 LTS.
\end{itemize}

All code is written for Python 3.10, and our codebase includes instructions on how to install all relevant packages.

\section{Additional experimental results}\label{app:exp-more}

In this appendix, we discuss show the results of some of the experiments on $\tpomcp$ that did not make it into the main paper.

\textbf{Robustness to visual corruption}
\Cref{fig:pomcpnoise} shows the performance of $\tpomcp$, as compared to the other algorithms form our experiments, at different noise probabilities.
As expected, it's performance is significantly worse than that of all the $\pbphsvi$ variants.
However, it's performance does noet decrease as quickly as for some of the other algorithms.
Most notably, it's performance drops only slightly for $\envflowers$, and at high noise probabilities it's performance is higher than that of $\perchsvi$ and $\dqn$ in both variants of $\envlake$.

\begin{figure*}[tb]
    \centering
    \includegraphics[width=0.9\textwidth]{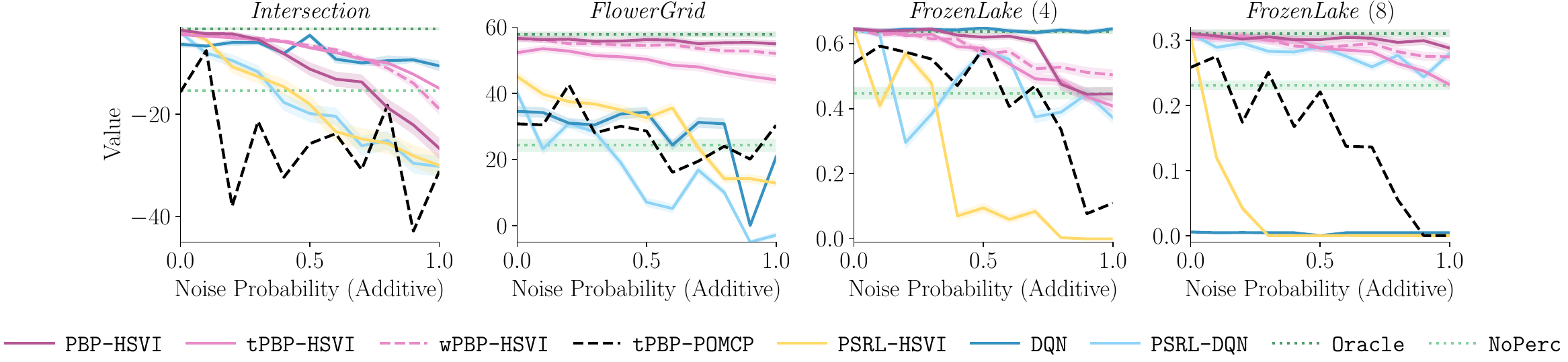}
    \caption{
    Average discounted returns (\emph{Value}) for different algorithms (including $\tpomcp$) at different probabilities of receiving noisy observations.
    Additive noise refers to images that are correctly classified with $0.4$ probability.
    Shaded areas show $95\%$ confidence in the value of the tested policy.
    }
    \label{fig:pomcpnoise}
    \Description{TODO}
\end{figure*}

\textbf{Belief distance}
As mentioned in \cref{subsec:exp_performance}, \cref{fig:beliefdist} shows the average $L^1$ distance between the real belief and the belief as expressed by $\tpomcp$'s particle filter.
Intuitively, the two beliefs are equal if the distance is $0$, and have no overlap in support if the distance is $2$.
Due to random particle sampling and particle invigoration, we never expect the distance to drop to $0$.
Indeed, as expected, the distance is slightly higher than 2 times the invigoration rate of $0.05$ for low noise levels.
Thus, at these noise probabilities, the belief update is unlikely to be responsible for the sub-par performance.
Distance increases for higher noise levels, but stays below $0.4$ for all environments but $\envlake~(8)$.
Even for $\envlake~(8)$, at the highest distance ($0.7$), the probability mass of the two beliefs still have an overlap of $65$\%.

\begin{figure*}[tb]
    \centering
    \includegraphics[width=0.7\textwidth]{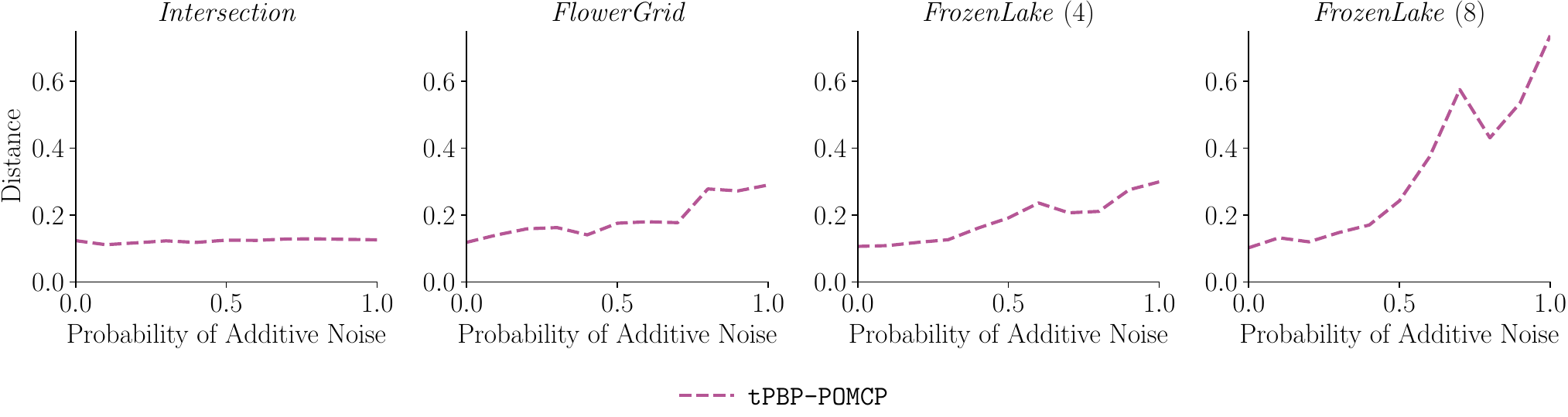}
    \caption{
    Average $L^1$ distance between belief as computed by $\thsvi$ and the belief as expressed by the particle filter of $\tpomcp$.
    }
    \label{fig:beliefdist}
    \Description{TODO}
\end{figure*}

}
{
}

\end{document}